\def\1{\bm{1}}
\DeclareMathAlphabet{\mathsfit}{\encodingdefault}{\sfdefault}{m}{sl}
\SetMathAlphabet{\mathsfit}{bold}{\encodingdefault}{\sfdefault}{bx}{n}
\def\gX{{\mathcal{X}}}
\def\gY{{\mathcal{Y}}}
\def\sP{{\mathbb{P}}}
\def\sR{{\mathbb{R}}}
\newtheorem{theorem}{Theorem}
\newtheorem{assumption}{Assumption}
\newtheorem{proposition}{Proposition}
\crefname{assumption}{assumption}{assumptions}
\Crefname{assumption}{Assumption}{Assumptions}
\title{Error-quantified Conformal Inference\\ for Time Series}
\author{Junxi Wu$^{1,2}$\thanks{Equal contribution.}~,\quad
Dongjian Hu$^{1}$\footnote[1]{}~,\quad
Yajie Bao$^{1}$\thanks{Corresponding authors}~,\quad 
Shu-Tao Xia$^{2}$\footnote[2]{}~,\quad 
Changliang Zou$^{1}$ \\
$^{1}$ School of Statistics and Data Science, LPMC, KLMDASR and LEBPS, Nankai University \\
$^{2}$ Tsinghua Shenzhen International Graduate School, Tsinghua University \\
\texttt{\{wujunxi,hudongjian\}@mail.nankai.edu.cn} \\ \texttt{\{yajiebao,zoucl\}@nankai.edu.cn, xiast@sz.tsinghua.edu.cn}
}
\begin{document}

\maketitle

\begin{abstract}
Uncertainty quantification in time series prediction is challenging due to the temporal dependence and distribution shift on sequential data. Conformal inference provides a pivotal and flexible instrument for assessing the uncertainty of machine learning models through prediction sets. Recently, a series of online conformal inference methods updated thresholds of prediction sets by performing online gradient descent on a sequence of quantile loss functions. A drawback of such methods is that they only use the information of revealed non-conformity scores via miscoverage indicators but ignore error quantification, namely the distance between the non-conformity score and the current threshold. To accurately leverage the dynamic of miscoverage error, we propose \textit{Error-quantified Conformal Inference} (ECI) by smoothing the quantile loss function. ECI introduces a continuous and adaptive feedback scale with the miscoverage error, rather than simple binary feedback in existing methods. We establish a long-term coverage guarantee for ECI under arbitrary dependence and distribution shift. The extensive experimental results show that ECI can achieve valid miscoverage control and output tighter prediction sets than other baselines. 
\end{abstract}
\section{Introduction}
Uncertainty quantification for time series is crucial across various domains including finance, climate science, epidemiology, energy, supply chains, and macroeconomics, etc, especially in high-stakes areas. To achieve this goal, an ideal model is supposed to consistently produce prediction sets that are well-calibrated, meaning that over time, the proportion of sets containing true labels should align closely with the intended confidence level. Classic methods for uncertainty quantification often rely on strict parametric assumptions of time-series models like autoregressive and moving average (ARMA) models \citep{brockwell1991time}. Other methods like Bayesian recurrent neural networks \citep{fortunato2017bayesian} and deep Gaussian processes \citep{li2020stochastic} are difficult to calibrate by themselves. And quantile regression models \citep{gasthaus2019probabilistic}  may ``overfit'' when estimating uncertainty. Additionally, complex machine learning models such as transformer \citep{NIPS2017_3f5ee243,li2019enhancing,gao2024inducing} have been designed to output accurate predictions but cannot provide valid prediction sets.
Hence, a systematic tool is required to perform uncertainty quantification for complex black-box models in time series data.

Conformal inference \citep{vovk2005algorithmic} is an increasingly popular framework for uncertainty quantification with arbitrary underlying point predictors (whether statistical, machine or deep learning). At its core, conformal prediction sets are guaranteed to contain the true label with a specified probability, under solely the assumption that the data is exchangeable. This is achieved without making parametric assumptions about the underlying data distribution, thereby enhancing its applicability across a wide range of models and datasets. However, in time series data, exchangeability does not hold due to strong correlations and potential distribution shifts. 

Recently, a growing body of research has focused on developing online conformal methods for scenarios where data arrives sequentially. One of the most important branches is the Adaptive Conformal Inference (ACI) proposed by \cite{aci_gibbs2021adaptive}, and its following works \citep{zaffran2022adaptive,bhatnagar2023improved,gibbs2024conformal}. At each time step, these methods generate a prediction set characterized by a single threshold or confidence parameter that regulates the size of the set, for example $\hat{C}_t = \{y\in \gY: S_t(X_t, y) \leq q_t\}$, where $S_t(\cdot,\cdot)$ is the non-conformity score function. After $Y_t$ is observed, they update the threshold $q_t$ through indicator $\mathds{1}\{Y_t \notin \hat{C}_t\}$, which is identical to $\mathds{1}\{S_t(X_t,Y_t) > q_t\}$. However, simple binary feedback cannot precisely capture the magnitude of error\footnote{To distinguish from the miscoverage error $\mathds{1}\{S_t(X_t,Y_t) > q_t\}-\alpha$, we refer ``error'' to the term $S_t(X_t,Y_t) - q_t$ in the rest of our paper.} $S_t(X_t,Y_t) - q_t$, quantifying the extent of under/over coverage of $\hat{C}_t$. For example, in the miscoverage case, an empty prediction set and a prediction set that almost covers the true label yield the same feedback value in ACI and its variants. Hence it will take a longer time to correct past mistakes, see the blue curve in \Cref{figure google prophet ogd vs eci}.

\vspace{-1em}
\begin{figure*}[ht]
  \centering
  \includegraphics[width=0.65\textwidth]{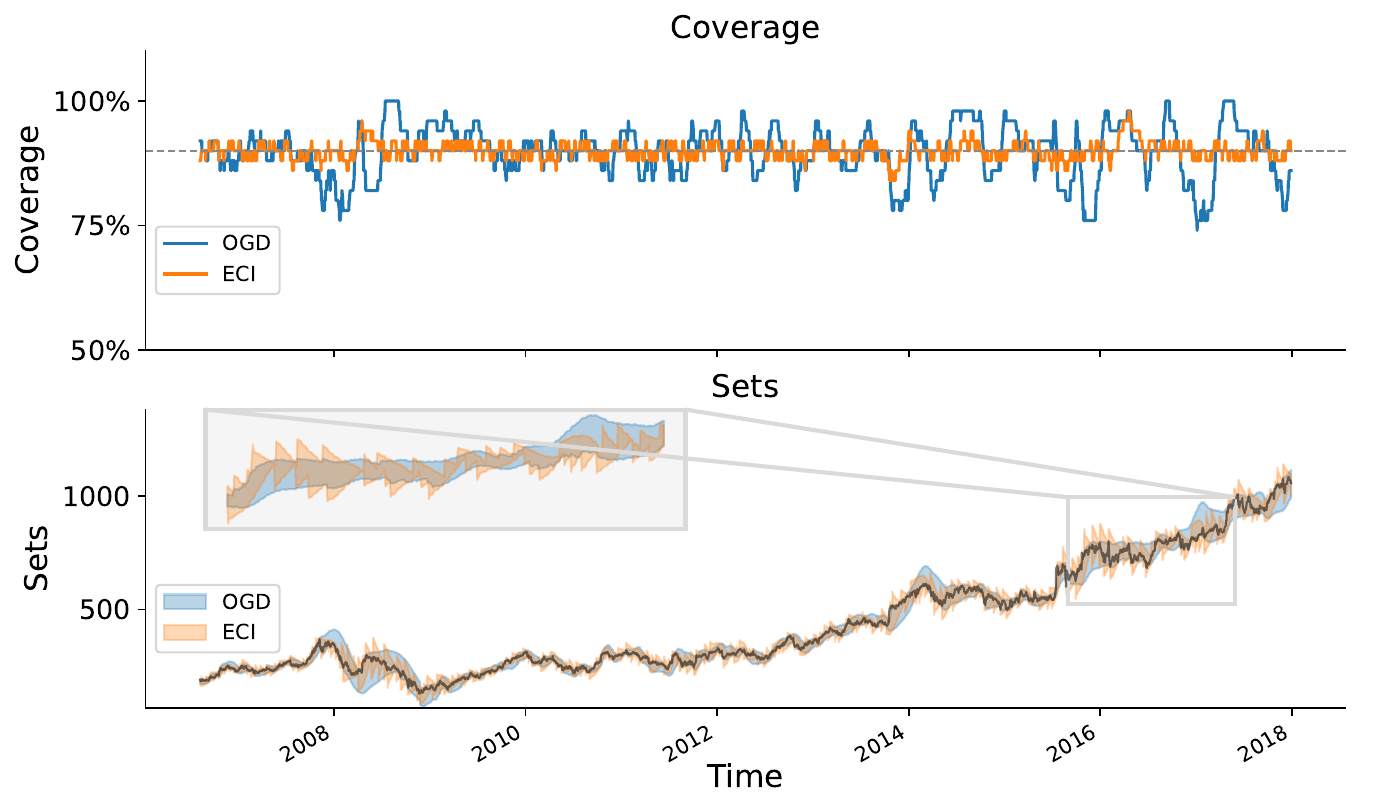}
  \vspace{-0.5em}
  \caption{Comparison results between OGD (online (sub)gradient descent) and ECI on Google stock dataset with Prophet model. OGD uses the same feedback as ACI but updates the confidence level. The coverage is averaged over a rolling window of 50 points.
  \vspace{-1em}
}
  \label{figure google prophet ogd vs eci}
\end{figure*}

In this paper, we propose \textit{Error-quantified Conformal Inference }(ECI) based on an adaptive \textit{error quantification} (EQ) term that provides additional smooth feedback. ECI not only uses the miscoverage indicator in the feedback, but also introduces a continuous EQ function to assess the magnitude of revealed error $S_t(X_t,Y_t) - q_t$. Benefiting from the EQ term, our online conformal procedure will react quickly to distribution shifts in time series, see \Cref{figure google prophet ogd vs eci}. This leads to tighter prediction sets without sacrificing the miscoverage rate. 
We summarize our main contributions as follows:
\begin{itemize}
    \item We propose ECI for uncertainty quantification in time series. It is a novel method based on adaptive updates with additional smooth feedback, quantifying the extent of under/over coverage.
    We further propose two variants of ECI. The first introduces a cutoff threshold for the EQ term to avoid over-compensation caused by small errors. Another variant integrates the error of previous steps to make coverage more stable.
    
    \item There are mainly two theoretical results. Firstly, we obtain a coverage guarantee for ECI with a fixed learning rate and not restricted to long-term. We prove it by showing that every miscoverage step will be followed by several coverage steps given a proper learning rate. Secondly, for arbitrary learning rates, we give a finite-sample upper bound for the averaged miscoverage error. Both theoretical results do not need any assumption on the data-generating distribution.
    
    \item Extensive experimental results demonstrate that ECI and its variants provide superior performance in time series, including data in finance, energy, and climate domains. We show that ECI maintains coverage at the target level and obtains tighter prediction sets than other state-of-the-art methods.
\end{itemize}

\section{Background and problem setup}
\vspace{-0.5em}
\subsection{Conformal inference}
\vspace{-0.5em}
Let $\hat{f}:\gX \to \gY$ be a prediction model trained on an independent training set. Given labeled data $\{(X_i,Y_i)\}_{i\leq n} \subset \gX \times \gY$ and test data $X_{n+1} \in \gX$, the objective is to construct a confidence set for the unknown label $Y_{n+1}$. To determine whether the candidate value $y$ is a reasonable estimate of $Y_{n+1}$, we define a non-conformity score function 
$S(\cdot,\cdot): \gX \times \gY \to \sR$. For example, in the regression task, we may take absolute residual score, scaled residual score \citep{lei2018distribution}, and conformalized quantile regression score \citep{romano2019conformalized}.
Generally, $S(X_i, Y_i)$ depends on the base model $\hat{f}$ and measures how well the prediction value $\hat{f}(X_i)$ conforms the true label $Y_i$.
Given the nominal level $\alpha \in (0,1)$, split conformal prediction \citep{papadopoulos2002inductive,lei2018distribution} outputs the prediction set $\hat{C}(X_{n+1})=\{y\in \gY: S(X_{n+1},y)\leq s\}$, where the threshold $s$ is the $\lceil(1-\alpha)(n+1)\rceil\text{-th smallest value among }\{S(X_i,Y_i)\}_{i=1}^n$. If $\{(X_i,Y_i)\}_{i=1}^{n+1}$ are exchangeable, we have the finite-sample coverage guarantee $\mathbbm{P}\{Y_{n+1} \in \hat{C}(X_{n+1})\}\geq 1-\alpha$.

In practice, split conformal prediction divides the labeled data into a training set for fitting the predictive model and a calibration set for computing non-conformity scores. There are other variants of conformal inference to better utilize the labeled data, like full conformal \citep{vovk2005algorithmic}, Jackknife+ \citep{barber2021predictive}. In many scenarios, data may exhibit a distribution shift and thus are no longer exchangeable. \citet{chernozhukov2018exact} extend conformal inference to ergodic cases with dependent data, but needs transformations of data to be a strong mixing series. \citet{oliveira2024split} study split conformal prediction for non-exchangeable data relying on some distributional assumptions.
We refer to \cite{tibshirani2019conformal}, \cite{podkopaev2021distribution}, \cite{barber2023conformal} and \cite{yang2024doubly} for more development dealing with non-exchangeable data.

\subsection{Conformal inference for sequential data}
\vspace{-0.5em}
Recently, significant efforts have been made to extend conformal inference to online schemes. \cite{aci_gibbs2021adaptive} proposed ACI which models the distribution shift in time series as a learning problem in a single parameter whose optimal value varies over time. Based on ACI, several works \citep{gibbs2024conformal,zaffran2022adaptive,bhatnagar2023improved,podkopaevadaptive,angelopoulos2024online,yang2024bellman} used online learning techniques to adaptively adjust the size of the prediction set based on recent observations. \citet{gibbs2024conformal} and \citet{bhatnagar2023improved} utilized meta-learning approaches to aggregate the results updated with multiple learning rates or experts, where main algorithms were adapted from \citet{gradu2023adaptive} and \citet{jun2017improved} respectively. In addition, \citet{podkopaevadaptive} extended the betting technique in \citet{orabona2016coin} to the conformal setting and proposed a new parameter-free algorithm. The methods mentioned above can achieve a long-term coverage guarantee without any assumptions about the data-generating process. \citet{xu2021conformal,xu2023conformal} casted the problem of constructing a conformal prediction set as predicting the quantile of a future residual and propose algorithms to adaptively re-estimate (conditional) quantiles. However, these methods are constrained by model and distribution assumptions and potentially suffer from over-fitting problems. \citet{weinstein2020online} and \citet{bao2024cap} investigated online selective conformal inference problem for i.i.d. data stream.

Closely related to our work is the Conformal PID algorithm of \cite{pid_angelopoulos2024conformal}, which simplifies and strengthens existing analyses in online conformal inference with ideas from control theory. Our algorithm differs from theirs by replacing the integration and scorecasting with the EQ term, and is thus able to yield significantly tighter prediction sets with valid coverage. A more detailed description of the competing methods is included in the \Cref{More details on existing methods}.

\subsection{Problem setup}
\vspace{-0.5em}
Suppose the time series data $\{(X_t,Y_t)\}_{t\geq 1} \subset \gX\times \gY$ are collected sequentially. At time $t$, our goal is to construct a prediction set $\hat{C}_t(X_t)$ for the unseen label $Y_t$ based on the machine learning model trained on previously observed data $\{(X_i,Y_i)\}_{i\leq t}$. 
Aligned with the standard conformal inference methods, we use a non-conformity score function $S_t(\cdot,\cdot): \mathcal{X}\times \mathcal{Y} \to \mathbbm{R}$ that may change over time. For example, in the regression task, $S_t(x,y) = |y-\hat{f}_t(x)|$ with the base model $\hat{f}_t$ trained at time $t$. Then we construct the conformal prediction set by
\begin{align}\label{eq:online_PI}
    \hat{C}_t(X_t)=\{y \in \mathcal{Y}: S_t(X_t,y) \leq q_t\},
\end{align}
where $q_t$ is the threshold that estimates or approximates $(1-\alpha)$-th quantile for the distribution of the non-conformity score $S_t(X_t, Y_t)$.

Note that if the data sequence $\{(X_i,Y_i)\}_{i\leq t+1}$ are i.i.d. or exchangeable, according to the split conformal prediction, taking $q_t$ in \Cref{eq:online_PI} to be the $(1-\alpha)(1+t^{-1})$-th sample quantile of $\{S_i(X_i,Y_i)\}_{i\leq t}$ yields a valid coverage guarantee $\sP\{Y_{t+1} \in \hat{C}_{t+1}(X_{t+1})\}\geq 1-\alpha$. However, in an adversarial setting where exchangeability does not hold, such as time series data with strong correlations or distribution shifts, it is very difficult to achieve such a real-time coverage guarantee. Recent works \citep{bhatnagar2023improved,angelopoulos2023prediction,angelopoulos2024online,podkopaevadaptive} began to consider the following long-term miscoverage control:
\begin{equation}
    \lim_{T \to \infty} \frac{1}{T}\sum_{t=1}^T\mathds{1}\{ Y_t \notin \hat{C}_t(X_t)\}=\alpha.
    \label{limit1}
\end{equation}
Hence, our primary target is to design an online algorithm to dynamically choose thresholds $\{q_t\}_{t\geq 1}$ that can achieve the control in \eqref{limit1} without any distributional assumptions about data.

\section{Error-quantified conformal inference}

\subsection{Error quantification via smoothing feedback}
\vspace{-0.25em}
\label{ECI}

The online conformal inference framework developed by \citet{aci_gibbs2021adaptive} adopts the online learning technique to learn the thresholds $\{q_t\}_{t\geq 1}$ based on previously observed data.
Let $s_t = S(X_t, Y_t)$ and $\ell_t(q) = (s_t-q)(\mathds{1}\{s_t > q\}-\alpha)$ denotes the $(1-\alpha)$-th quantile loss.
When the label $Y_t$ is revealed, we can perform online (sub)gradient descent (OGD) as
\begin{equation}
\begin{aligned}
q_{t+1} =q_t-\eta\ \nabla \ell_t(q_t)= q_t+\eta(\mathrm{err}_t-\alpha), 
\label{quantile tracking}
\end{aligned}
\end{equation}
where $\text{err}_t=\mathds{1}\{s_t>q_t\} = \mathds{1}\{Y_t\not\in \hat{C}_t(X_t)\}$ is the miscoverage indicator and $\eta$ is the learning rate. The subgradient $\mathrm{err}_t-\alpha$ can be regarded as the feedback after observing label $Y_t$: if $\hat{C}_t(X_t)$ does not cover $Y_t$ (i.e., $\mathrm{err}_t=1$), OGD will increase the threshold to construct a more liberal prediction set in the next step; otherwise, OGD will construct a more conservative prediction set. In fact, ACI uses the same idea but updates the confidence level of the prediction set instead. Notice that the enrolled average of history feedback $\sum_{t=1}^T(\mathrm{err}_t-\alpha)/T$ is exactly the averaged miscoverage  we aim to control in \Cref{limit1}. Essentially, the long-term coverage guarantee of OGD or ACI comes from the equivalence between feedback and control.

However, subgradients of quantile loss $\ell_t$ can only take two values $1-\alpha$ and $-\alpha$, regarding the feedback value of coverage ($s_t \leq q_t$) and miscoverage ($s_t > q_t$) respectively. As a consequence, the feedback keeps the same value no matter how severe the miscoverage is or how conservative the coverage is. In other words, discrete feedback value does not exploit the information of the error. 

To address this issue, we consider smoothing the feedback when updating the thresholds. Let $f(x) \in [0,1]$ be a smooth approximation to the indicator $\mathds{1}(x>0)$, such as Sigmoid function, Gaussian error function, etc. Then we can approximate the quantile loss $\ell_t(q)$ via $\tilde{\ell}_t(q) = (f(s_t - q)-\alpha)(s_t-q)$. This smoothing technique was demonstrated in previous studies \citep{kaplan2017smoothed,fernandes2021smoothing}.
Applying OGD on the smoothed quantile loss $\{\tilde{\ell}_t(q)\}_{t\geq 1}$, we have the following fully smoothed update rule
\begin{equation}
    \label{eq:full_smoothed_rule}
\begin{aligned}
q_{t+1} &= q_t - \eta \nabla \tilde{\ell}_t(q_t)= q_t + \eta\cdot \big[f(s_t-q_t)-\alpha+(s_t-q_t) \nabla f(s_t-q_t)\big].
\end{aligned}
\end{equation}
Here we called $(s_t-q_t) \nabla f(s_t-q_t)$ as the \emph{error quantification} (EQ) term, which provides additional feedback based on the magnitude of error $s_t-q_t$.

To have a preliminary view of the effect of EQ term, we take Sigmoid function $f(x)=\frac{1}{1+\exp{(-cx)}}$ as an example, and show the curve of EQ function $x \nabla f(x)$ in \Cref{EQ_function_plot}. Firstly, the feedback from the EQ term has the same sign as subgradient $\mathrm{err}_t - \alpha$ by letting $x = s_t-q_t$, which means that it does not deviate from the direction of update in OGD. 
On the other hand, when $s_t$ significantly deviates from the current threshold $q_t$ (for example, an abrupt change point appears at time $t$), the EQ term tends to decrease as $s_t - q_t$ grows. This is to prevent the subsequent prediction sets from running out of control due to a single anomaly data point.

\begin{figure*}[h]
  \centering
  \includegraphics[width=0.65\textwidth]{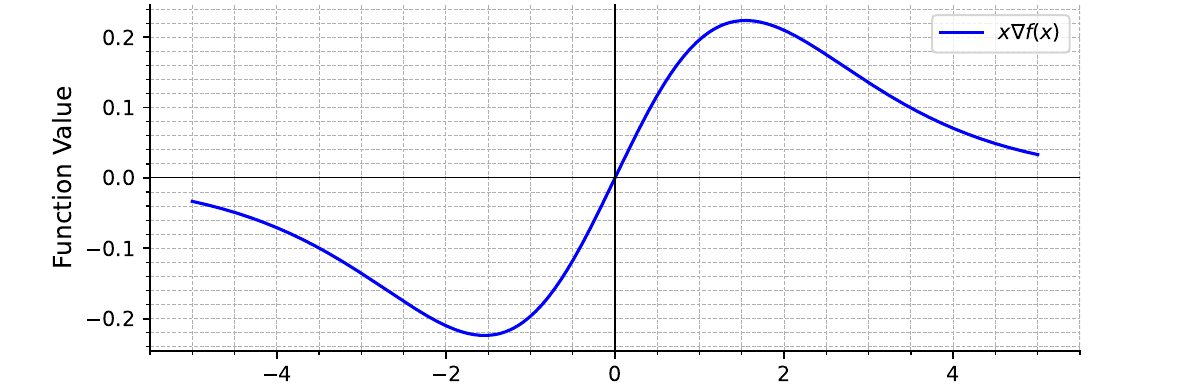}
  \vspace{-0.6em}
  \caption{Dynamics of the EQ function across variable $x$, where $f(x)$ is Sigmoid function and $c=1$.}
  \label{EQ_function_plot}
  \vspace{-0.5em}
\end{figure*}
\vspace{-0.5em}
However, \Cref{eq:full_smoothed_rule} may introduce extra bias during the derivation process, as detailed in \Cref{explanations for extra bias}.
If we perform the fully smoothed update rule \eqref{eq:full_smoothed_rule}, we cannot directly control the averaged miscoverage gap $\sum_{t=1}^T\mathrm{err}_t/T -\alpha$ due to the smoothing bias.

Hence we keep the EQ term and replace $f(s_t-q_t)$ in \Cref{eq:full_smoothed_rule} with the miscoverage indicator $\operatorname{err}_t$, that is
\begin{equation}
    q_{t+1} = q_t + \eta \cdot \big[\text{err}_t - \alpha+  (s_t-q_t) \nabla f(s_t-q_t)\big].
    \label{ECI update}
\end{equation}
Formally, we propose \textit{Error-quantified Conformal Inference} (ECI) based on \eqref{ECI update}, which leverages the miscoverage indicator and the EQ term simultaneously to update thresholds with partially smoothed feedback.
Compared with OGD in \Cref{quantile tracking}, the feedback from EQ term is compensation over the original subgradient $\operatorname{err}_t -\alpha$. Clearly, for the same pair $(s_t, q_t)$, ECI can provide more feedback compared with OGD and fully smoothed rule in \Cref{eq:full_smoothed_rule}. We provide an ablation study in \Cref{full-smoothed way} to illustrate the importance of the EQ term.

\subsection{Extended versions}
\vspace{-0.25em}
\textbf{ECI-cutoff.}
The EQ term reacts quickly to sudden distribution shifts like changepoints. However, when $q_t$ is slightly larger/smaller than $s_t$, the additional adjustment provided by the EQ term may lead to under/over coverage of prediction sets. For example, if $s_t \textcolor{blue}{<} q_t$ and $s_{t+1} \in \left[q_t-\eta_t\alpha+\eta_t(s_t-q_t) \nabla f(s_t-q_t) ,q_t-\eta_t \alpha \right]$, then adding the EQ term will cause miscoverage, i.e. $s_{t+1}>q_{t+1}$. Therefore, we introduce a cutoff to our added term: 
\begin{equation}
    q_{t+1} = q_t + \eta \cdot \bigg[(\text{err}_t - \alpha) + (s_t-q_t)\ \nabla f(s_t-q_t) \mathds{1}(|s_t-q_t| > h_t)\bigg],
    \label{ECI-cutoff update}
\end{equation}
where $h_t=h\max_{t-w\leq i,j \leq t}|s_i-s_j|$, and $h$ is a pre-determined threshold, $w$ is window length.

\textbf{ECI-integral.}
More information input generally means better performance. Note that \eqref{ECI update} only takes the information of one last timestep into consideration. A natural alternative is to integrate the error of more than a single step. Therefore, we propose the extended update:
\begin{equation}
    q_{t+1} = q_t + \eta \cdot \sum_{i=1}^t w_i\bigg\{\text{err}_i - \alpha+ (s_i-q_i)\ \nabla f(s_i-q_i)\bigg\},
    \label{ECI-integral update}
\end{equation}
where $\{w_i\}_{1\leq i\leq t}\subseteq[0,1]\text{ is a sequence of increasing weights with }\sum_{i=1}^tw_i=1.$
Equation \eqref{ECI-integral  update} evaluates the recent empirical miscoverage frequency and degree of miscovery when deciding whether or not to lower or raise $q_t$, making coverage more stable. 

\subsection{Distribution-free coverage guarantees}
\vspace{-0.25em}
In this section, we outline the theoretical coverage guarantees of ECI. The detailed proofs are referred in \Cref{proof}.  We first present two assumptions and briefly explain their feasibility. 

\begin{assumption}
    \label{assumption1}
     For any $t \in \mathbb{N_+}$, there exists $B>0$  such that $s_t \in [0,B]$.

\end{assumption}
\begin{assumption}
    \label{assumption2}
   $|x\nabla f(x)|\leq \lambda$ and $|\nabla f(x)|\leq c$ for any $x \in \mathbbm{R}$, where $\lambda, c>0$ are constants. 
\end{assumption}
\Cref{assumption1} assumes boundedness of scores, which is ubiquitous in online conformal literature \citep{aci_gibbs2021adaptive,pid_angelopoulos2024conformal,angelopoulos2024online}. For \Cref{assumption2}, note that a typical choice of $f$ is the Sigmoid function $\sigma(cx)$ with a scale parameter $c>0$. Then $$|\nabla f(x)|=c \cdot |\sigma(cx)(1-\sigma(cx))|\leq \frac{c}{4},$$
$$|x \nabla f(x)|=|cx\sigma(cx)(1-\sigma(cx))|<|cx \frac{e^{cx}}{1+e^{cx}}|<|cxe^{cx}|\leq\frac{1}{e}.$$  Thus \Cref{assumption2} is naturally satisfied. Moreover, if we consider ECI-cutoff and replace $\nabla f(x)$ by  $\nabla f_{h}(x)=\nabla f(x)\mathds{1}(|x|>h)$, then $$|\nabla f_h(x)|=c \cdot |\sigma(cx)(1-\sigma(cx))|\mathds{1}(|x|>h)<\frac{c}{1+e^{ch}},$$ 
which can be sufficiently small as long as we set $c$ to be large.\\

The following theorem provides a dynamic miscoverage bound when the learning rate is fixed. We prove it by showing that every miscoverage step will be followed by at least $N-1$ coverage step, where $N=\lfloor \alpha^{-1} \rfloor$, that is if $Y_t \notin \hat{C}_t$ happens, then $Y_{t+i} \in \hat{C}_{t+i}$ holds for $i=1,2,\cdots, N-1$.
\begin{theorem}
\label{theorem1}
    Assume that $\eta>2NB$, $c<\frac{\min\{\eta,N^2\}}{2N^2\left[B+(1-\alpha+\lambda)\eta\right]}$, where $N=\lfloor \frac{1}{\alpha} \rfloor$. Under \Cref{assumption1,assumption2}, the prediction sets generated by \eqref{ECI update} satisfies for any $T$,
    \begin{equation}
    \label{finite-sample coverage}
        \frac{1}{N}\sum_{t=T+1}^{T+N} \mathds{1}\{Y_t \notin \hat{C}_t\}\leq \frac{1}{N}.
    \end{equation}
\end{theorem}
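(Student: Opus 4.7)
The plan is to prove the pointwise claim flagged by the authors: if a miscoverage occurs at time $t$, then coverage occurs at every one of the next $N-1$ steps $t+1,\ldots,t+N-1$. That statement immediately yields at most one miscoverage in the window $[T+1,T+N]$, hence the theorem.

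I would argue by contradiction. Suppose the next miscoverage after $t$ happens at $t+k$ for some $1 \le k \le N-1$, so $\mathrm{err}_t = \mathrm{err}_{t+k} = 1$ and $\mathrm{err}_{t+j} = 0$ for $1 \le j \le k-1$. Telescoping the ECI recursion \eqref{ECI update} from $t$ to $t+k$ gives
\[
q_{t+k} \;=\; q_t + \eta(1 - k\alpha) + \eta\sum_{j=0}^{k-1}(s_{t+j} - q_{t+j})\,\nabla f(s_{t+j} - q_{t+j}).
\]
Because $f$ is a monotone smoothing of the indicator, $\nabla f \ge 0$; hence the EQ contribution at $t$ is nonnegative, while each EQ contribution at a coverage step is nonpositive. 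Together with Assumption~\ref{assumption1}, the miscoverage conditions force $q_t, q_{t+k} < B$.

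Next I would control the unfavorable (negative) EQ contributions via $|(s-q)\nabla f(s-q)| \le c\,|s-q|$ from Assumption~\ref{assumption2}. The ECI step size is bounded by $\eta(1-\alpha+\lambda)$, so $|s_{t+j}-q_{t+j}| \le B + \eta(1-\alpha+\lambda)$ for all $j$ in the window. Dropping the nonnegative term at $t$ and using $s_{t+k} \le B$, the displayed identity collapses to
\[
q_t + \eta(1-k\alpha) \;<\; B + (k-1)\eta c\bigl[B + \eta(1-\alpha+\lambda)\bigr].
\]
For $k \le N-1 = \lfloor 1/\alpha\rfloor - 1$ one has $1 - k\alpha \ge \alpha$. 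The hypothesis $c < \min\{\eta,N^2\}/\bigl(2N^2[B+(1-\alpha+\lambda)\eta]\bigr)$ is calibrated precisely so that the right-hand EQ slack is simultaneously dominated by $\eta/(2N)$ and by $1/2$, which together with $\eta > 2NB$ should force $q_t$ below a value that is impossible under the dynamics (e.g.\ well below $-B$).

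The principal obstacle, and where I expect the real work, is ruling out this impossibly negative $q_t$: a miscoverage event by itself only yields $q_t < s_t \le B$, never a positive lower bound, so an auxiliary invariant on $q$ is required. I would either (i) walk back to the most recent time $\tau \le t$ with $q_\tau \ge 0$ and argue that every miscoverage on $(\tau,t]$ pushed $q$ upward by at least $\eta(1-\alpha)$, so accumulated upward kicks dominate any downward drift, or (ii) retain the nonnegative $\mathrm{EQ}_t$ term in the telescoping above and lower-bound it in the regime where $|s_t-q_t|$ is large. Either route hinges on the calibration $\eta > 2NB$, which ensures a single miscoverage's upward kick $\eta(1-\alpha) \gtrsim 2NB$ dominates up to $N-2$ coverage-step drifts of size $\eta\bigl(\alpha + c[B+\eta(1-\alpha+\lambda)]\bigr)$ once $c$ is chosen as in the hypothesis.
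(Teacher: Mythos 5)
Your overall strategy is the same as the paper's: show that a miscoverage at time $t$ forces coverage at $t+1,\dots,t+N-1$ by telescoping \eqref{ECI update}, bounding each EQ contribution in absolute value by $c\left[B+(1-\alpha+\lambda)\eta\right]$, and using $\eta>2NB$ together with the hypothesis on $c$ to make the accumulated EQ drift smaller than $\eta/N$. The gap you flag --- the need for a lower bound on $q_t$ at the miscoverage time --- is genuine, and it is exactly the point where the paper does something you did not: it redefines the iterates by setting $q_t\leftarrow\max\{q_t,0\}$ after each update, on the grounds that a negative threshold yields an empty prediction set so the clipping is a harmless reformulation of the algorithm. With $q_t\ge 0$ in hand, your displayed inequality closes immediately: $1-k\alpha\ge 1/N$ for $k\le N-1$, so the left side is at least $\eta/N$, while the right side is at most $B+(N-2)\eta/(2N^2)<\eta/(2N)+\eta/(2N)=\eta/N$, a contradiction. (Whether clipping truly "does not affect the validity" of the stated theorem for the unclipped recursion is a point the paper asserts rather than argues, but the proof as written is for the clipped dynamics.)

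Neither of your two proposed repairs works as stated, so the gap is not closed by your sketch. For route (i), the only a priori lower bound available is $q_t\ge-(\alpha+\lambda)\eta$ (\Cref{proposition1} with fixed $\eta$), and a single miscoverage kick of size $\eta(1-\alpha)$ need not dominate $(\alpha+\lambda)\eta+B$ for general $\lambda$ satisfying \Cref{assumption2}; the theorem places no upper bound on $\lambda$, so the "accumulated upward kicks dominate downward drift" claim cannot be made uniformly. For route (ii), retaining the EQ term at the miscoverage step does not help: for the sigmoid, $x\nabla f(x)\to 0$ as $x\to\infty$, so precisely in the regime where $s_t-q_t$ is large (e.g.\ $q_t$ very negative and $s_t$ small) the nonnegative EQ term is close to zero and admits no useful lower bound. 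The honest fix is the paper's: work with the clipped iterates so that $q_t\ge 0$ always holds, then run your telescoping argument verbatim.
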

As standard choices for $\alpha$ are $\{0.01,0.05,0.1,0.2\}$, if we further assume $\alpha=N^{-1}$ for some $N \in \mathbbm{N}$, it follows from \eqref{finite-sample coverage} that $\frac{1}{N}\sum_{t=T+1}^{T+N} \mathds{1}\{Y_t \notin \hat{C}_t\}\leq \alpha$ for any $T$, which immediately yields the long-term coverage guarantee in \Cref{limit1}.

We present the bound of averaged miscoverage error under adaptive learning rates. The result is also distribution-free and only needs a proper choice of learning rate sequence.
\begin{theorem}
\label{theorem2}
  Let $\{\eta_t\}_{t \geq 1}$ be an arbitrary positive sequence. Under \Cref{assumption1,assumption2},  the prediction sets generated by \eqref{ECI update} with adaptive learning rate $\eta_t$ satisfies:

\begin{equation}
    \bigg|\frac{1}{T} \sum_{t=1}^T (\text{err}_t-\alpha) \bigg|  \leq  \frac{(B+M_{T-1})\|\Delta_{1:T}\|_1}{T} +c \left[B+(1-\alpha+\lambda)M_{T-1}\right],
\end{equation}

where $\|\Delta_{1:T}\|_1=|\eta_1^{-1}|+\sum_{t=2}^T|\eta_t^{-1}-\eta_{t-1}^{-1}|, M_T = \max_{1\leq r \leq T}\eta_r$.
\end{theorem}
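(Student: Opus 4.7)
The plan is to solve the ECI recursion \eqref{ECI update} for $\mathrm{err}_t-\alpha$ and to exploit the resulting telescoping structure. Rearranging the adaptive update gives
\begin{equation*}
\mathrm{err}_t - \alpha \;=\; \frac{q_{t+1}-q_t}{\eta_t} \;-\; (s_t - q_t)\,\nabla f(s_t - q_t),
\end{equation*}
so averaging from $t=1$ to $T$ decomposes the target quantity $T^{-1}\sum_{t=1}^T(\mathrm{err}_t-\alpha)$ into a telescoping-type piece $T^{-1}\sum_t(q_{t+1}-q_t)/\eta_t$ and an ``error-quantification'' residual $-T^{-1}\sum_t(s_t-q_t)\nabla f(s_t-q_t)$. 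I expect the first piece to produce the $\|\Delta_{1:T}\|_1/T$ factor, and the second the constant-order term scaling with $c$.

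To dispatch the telescoping piece I would apply Abel's summation by parts with weights $a_t := \eta_t^{-1}$, obtaining
\begin{equation*}
\sum_{t=1}^T a_t\,(q_{t+1}-q_t) \;=\; a_T q_{T+1} - a_1 q_1 + \sum_{t=2}^T (a_{t-1}-a_t)\, q_t,
\end{equation*}
and noting that $|a_T|\le|a_1|+\sum_{t=2}^T|a_t-a_{t-1}|=\|\Delta_{1:T}\|_1$ by the triangle inequality allows every boundary and interior term to be collected into $\sup_t|q_t|\cdot\|\Delta_{1:T}\|_1$, up to an absolute constant. For the EQ residual, \Cref{assumption2} gives $|\nabla f|\le c$, and \Cref{assumption1} gives $|s_t-q_t|\le B+\sup_t|q_t|$; summing and dividing by $T$ produces a term of order $c\,(B+\sup_t|q_t|)$. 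Both pieces therefore reduce to a single quantitative question: how large can $|q_t|$ become along any trajectory?

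The key technical step, and the main obstacle I anticipate, is a trajectory-wise bound of the form $|q_t|\le B+(1-\alpha+\lambda)M_{T-1}$ that must hold without any assumption on the data stream. I would prove it by induction on $t$, exploiting the monotonicity of $f$ (so $\nabla f\ge 0$) together with \Cref{assumption1}: whenever $q_t>B$, we have $s_t\le B<q_t$, which forces $\mathrm{err}_t=0$ and $(s_t-q_t)\nabla f(s_t-q_t)\le 0$, so \emph{both} feedback summands in \eqref{ECI update} are non-positive and $q_{t+1}<q_t$. Consequently $q$ can exceed $B$ only via a single-step overshoot of magnitude at most $\eta_{t-1}(1-\alpha+\lambda)\le(1-\alpha+\lambda)M_{T-1}$, using $|\mathrm{err}_{t-1}-\alpha|\le 1-\alpha$ and $|(s-q)\nabla f(s-q)|\le\lambda$ from \Cref{assumption2}; a symmetric argument controls $q_t$ from below. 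Plugging this uniform bound back into the Abel estimate and the EQ estimate, and then consolidating constants, yields the claimed inequality. The delicate point is that the confinement argument must be robust to arbitrary positive sequences $\{\eta_t\}_{t\ge 1}$ (including very large single-step learning rates), so standard convexity/regret tools from online learning are unavailable; the proof instead rests purely on monotonicity of $f$ combined with boundedness of the scores.
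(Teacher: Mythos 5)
Your proposal is correct and follows essentially the same route as the paper: the rearrangement $\eta_t(\mathrm{err}_t-\alpha)=q_{t+1}-q_t-\eta_t(s_t-q_t)\nabla f(s_t-q_t)$, the summation by parts with weights $\eta_t^{-1}$ (the paper writes $\eta_t^{-1}=\sum_{r\le t}\Delta_r$ and swaps the order of summation, which is the same computation), the inductive confinement bound on $q_t$ (the paper's Proposition 1), and the bound $|(s_t-q_t)\nabla f(s_t-q_t)|\le c\,[B+(1-\alpha+\lambda)M_{t-1}]$ (Proposition 2). The only deviations are cosmetic: your Abel estimate collects the boundary terms into $2\|\Delta_{1:T}\|_1\sup_t|q_t|$ rather than $\|\Delta_{1:T}\|_1\max_r|q_{T+1}-q_r|$, a constant-factor looseness you already flag.
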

For the first term, following the analysis in \cite{angelopoulos2024online}, if the learning rate decreases over extended periods when the distribution appears stable, but then increases again, in a repetitive manner, $\|\Delta_{1:T}\|_1/T$ can be sufficiently small. To be specific, $\|\Delta_{1:T}\|_1\leq 2N_t/(\min_{t\leq T} \eta_t)$, where $N_T=\sum_{t=1}^T \mathds{1}\{\eta_{t+1}>\eta_{t}\}$ denotes the number of times the learning rate is increased. Hence, if $\eta_t$ does not decay
too quickly and the number of “resets” $N_T$ is $o(T)$ , the first term in the upper bound in \eqref{theorem2} will be within an acceptable range. For the second term, if we set $c$  as a sufficiently small value, then the second term is also limited to a small value.

\section{Experiments}
\vspace{-0.5em}
\label{experiment}
\subsection{Setup}
\vspace{-0.5em}
\textbf{Datasets.}
We evaluate four real-world datasets: Amazon stock, Google stock \citep{cam2018stock}, electricity demand \citep{harries1999splice} and temperature in Delhi \citep{sumanth2017climate}. Besides, we evaluate the synthetic dataset under changepoint setting. In the subsequent sections, we will provide a detailed introduction to each of these datasets.

\textbf{Base predictors.}
We consider three diverse types of base predictors.

\begin{itemize}
    \item Prophet \citep{taylor2018forecasting}: As a Bayesian additive model, Prophet predicts the value $\hat{Y}_t$ as a function of time $t$, expressed as $\hat{Y}_t = g(t) + s(t) + h(t) + \epsilon_t$, where $g(t)$ models the overall trend, $s(t)$ accounts for periodic seasonal effects, $h(t)$ captures holiday effects, and $\epsilon_t$ represents the noise assumed to follow a normal distribution.
    \item AR (AutoRegressive Model): As a classic model, AR is defined as $\hat{Y}_t = \phi_1 Y_{t-1} +\phi_2 Y_{t-2} + ... + \phi_p Y_{t-p} + \epsilon_t$, where $\phi_1, \phi_2,..., \phi_p$ are the parameters, and $p$ is the order of AR model. Following \cite{pid_angelopoulos2024conformal}, we set $p=3$.
    \item Theta \citep{assimakopoulos2000theta}: As a decomposition-based forecasting approach, Theta model modify the curvature of a time series by applying a coefficient $\theta$ to its second differences. We typically use $\theta=0$ (the long-term trend) and $\theta=2$ (the short-term dynamics) to decompose the original series into two components.
\end{itemize}

\textbf{Baselines.}
We compare with four state-of-the-art methods: ACI \citep{aci_gibbs2021adaptive}, OGD, SF-OGD \citep{bhatnagar2023improved}, decay-OGD  \citep{angelopoulos2024online}, PID \citep{pid_angelopoulos2024conformal}. A detailed description of existing methods can be found in \Cref{More details on existing methods}.

\textbf{General implement.}
We choose target coverage $1-\alpha=90\%$ and construct asymmetric prediction sets using two-side quantile scores under $\alpha/2$ respectively. For each baseline, we select the most appropriate range of learning rates $\eta$ for the respective datasets and present the best results in the tables. For sets, all baselines will output asymmetric sets $[\hat{Y}_t-q^l_t, \hat{Y}_t+q^u_t]$ with upper score $q^u_t$ and lower score $q^l_t$ under half of the coverage level $\alpha/2$ respectively.

\textbf{Other implement.}
For EQ term, we set $f(x)=\frac{1}{1+\exp{(-cx))}}$ as Sigmoid function and $c=1$. For ECI-cutoff, we set $h=1$ and $h_t = h \cdot (\max\{s_{t-w+1}, \dots, s_t\} - \min\{s_{t-w+1}, \dots, s_t\})$. For ECI-integral, we set weights $w_i=\frac{0.95^{t-i}}{\sum_{j=1}^t0.95^{t-j}}$ for $1\leq i \leq t$. Specifically, PID, ECI, and its variants use adaptive learning rates $\eta_t = \eta \cdot (\max\{s_{t-w+1}, \dots, s_t\} - \min\{s_{t-w+1}, \dots, s_t\})$, where $w$ is window length. Unless necessary changes are made, the settings for all baselines adhere to the original papers and open-source codes.

\textbf{Overview of experimental results.}
We have conducted extensive experiments, including stock data in \Cref{results in finance domain}, electricity data in \Cref{results in energy domain}, climate data in \Cref{results in climate domain}, synthetic data in \Cref{Results in synthetic data}, and ablation study of hyperparameters in \Cref{Ablation study of hyperparameters}. Experimental results with Transformer as the base model can be found in \Cref{Experimental results with Transformer}. More comprehensive experimental results and discussion on the scorecaster and learning rates can be found in \Cref{More details on experiment}. Our code is available at \url{https://github.com/creator-xi/Error-quantified-Conformal-Inference}.

\subsection{Results in finance domain}
\vspace{-0.5em}
\label{results in finance domain}
We consider the uncertainty problem of forecasting stock prices, including Amazon (AMZN) and Google (GOOGL), collected over 9 years (from January 1, 2006 to December 31, 2014). Models will forecast the daily opening price of each of Amazon and Google stock on a log scale. 

The quantitative results are shown in \Cref{table amazon} and \Cref{table google}. For ACI, the occurrence of infinite sets is too frequent, due to updating $\alpha_t$ and adopting the $\alpha_t$-quantile of past scores as $q_t$. This implies that ACI may tend to conservatively expand the prediction sets in the face of more complex or volatile data to ensure high coverage rates. However, such a strategy is not always ideal in practical applications, as overly broad sets can reduce the precision and utility of the predictions. For OGD and SF-OGD, they achieve a good balance between coverage rate and set width to some extent. However, their performance overly relies on the selection of learning rate and may fail under many learning rate settings. For PID, its scorecasting can be seen as helping compensate for the base predictors' predictive accuracy. Thus, it can lead to improvements in the worse base predictor case (such as Prophet). However, in the case of better base predictors, it will instead widen the length of the prediction set. 

\begin{table}[ht]
\caption{The experimental results in the Amazon stock dataset with nominal level $\alpha = 10\%$. The best (shortest) width results are indicated with bold text.}
\label{table amazon}
\setlength{\tabcolsep}{1.25mm} 
\renewcommand{\arraystretch}{1.15} 
\begin{center}
\small
\begin{tabular}{c|ccc|ccc|ccc}
\hline
            & \multicolumn{3}{c|}{Prophet Model}            & \multicolumn{3}{c|}{AR Model}                  & \multicolumn{3}{c}{Theta Model}                \\
Method &
  \begin{tabular}[c]{@{}c@{}}Coverage\\ ( \%)\end{tabular} &
  \begin{tabular}[c]{@{}c@{}}Average\\ width\end{tabular} &
  \begin{tabular}[c]{@{}c@{}}Median\\ width\end{tabular} &
  \begin{tabular}[c]{@{}c@{}}Coverage\\ ( \%)\end{tabular} &
  \begin{tabular}[c]{@{}c@{}}Average\\ width\end{tabular} &
  \begin{tabular}[c]{@{}c@{}}Median\\ width\end{tabular} &
  \begin{tabular}[c]{@{}c@{}}Coverage\\ ( \%)\end{tabular} &
  \begin{tabular}[c]{@{}c@{}}Average\\ width\end{tabular} &
  \begin{tabular}[c]{@{}c@{}}Median\\ width\end{tabular} \\ \hline
ACI         & 90.2 & $\infty$   & 46.97  & 89.8 & $\infty$   & 13.77  & 89.7 & $\infty$   & 12.31  \\
OGD         & 89.6 & 55.15 & 30.00  & 89.9 & 19.10 & 15.00  & 89.8 & 18.07 & 14.50  \\
SF-OGD      & 89.5 & 61.47 & 31.75  & 89.9 & 24.44 & 21.05  & 90.0 & 23.88 & 21.14  \\
decay-OGD    & 89.9 & 97.22          & 36.20           & 89.7 & 20.23          & 14.01          & 89.2 & 17.49          & 13.46          \\
PID          & 89.8 & 52.56          & 39.09          & 89.6 & 59.22          & 37.93          & 89.5 & 61.19          & 40.20          \\
ECI          & 90.1 & 47.00    & 34.84 & 89.5 & 17.12 & 12.73          & 89.7 & 17.46 & 12.49 \\
ECI-cutoff   & 89.7 & 43.46          & \textbf{29.98} & 89.3 & \textbf{16.91} & 12.63 & 89.6 & \textbf{17.19} & 12.48          \\
ECI-integral & 89.8 & \textbf{42.01} & 30.02          & 89.5 & 16.99          & \textbf{12.62} & 89.6 & 17.20          & \textbf{12.46} \\ \hline
\end{tabular}
\end{center}
\end{table}

\begin{table}[ht]
\caption{The experimental results in the Google stock dataset with nominal level $\alpha = 10\%$.}
\label{table google}
\setlength{\tabcolsep}{1.25mm} 
\renewcommand{\arraystretch}{1.15} 
\begin{center}
\small
\begin{tabular}{c|ccc|ccc|ccc}
\hline
            & \multicolumn{3}{c|}{Prophet Model}            & \multicolumn{3}{c|}{AR Model}                  & \multicolumn{3}{c}{Theta Model}                \\
Method &
  \begin{tabular}[c]{@{}c@{}}Coverage\\ ( \%)\end{tabular} &
  \begin{tabular}[c]{@{}c@{}}Average\\ width\end{tabular} &
  \begin{tabular}[c]{@{}c@{}}Median\\ width\end{tabular} &
  \begin{tabular}[c]{@{}c@{}}Coverage\\ ( \%)\end{tabular} &
  \begin{tabular}[c]{@{}c@{}}Average\\ width\end{tabular} &
  \begin{tabular}[c]{@{}c@{}}Median\\ width\end{tabular} &
  \begin{tabular}[c]{@{}c@{}}Coverage\\ ( \%)\end{tabular} &
  \begin{tabular}[c]{@{}c@{}}Average\\ width\end{tabular} &
  \begin{tabular}[c]{@{}c@{}}Median\\ width\end{tabular} \\ \hline
ACI          & 90.0 & $\infty$   & 66.83  & 89.8 & $\infty$   & 18.64  & 90.5 & $\infty$   & 32.78  \\
OGD          & 89.7 & 57.60 & 46.00  & 90.7 & 33.76 & 23.00  & 89.9 & 31.49 & 29.50  \\
SF-OGD       & 89.6 & 58.92 & 47.78  & 89.9 & 28.31 & 24.42  & 90.0 & 34.04 & 31.48  \\
decay-OGD             & 89.9 & 77.23          & 50.18          & 90.2 & 46.53          & 26.77          & 90.2 & 55.32          & 33.71          \\
PID                   & 90.1 & 57.47          & 48.44          & 89.9 & 64.88          & 54.07          & 89.9 & 63.58          & 54.05          \\
ECI                   & 89.9 & 56.06          & 46.96          & 89.7 & 19.95          & \textbf{17.19} & 89.6 & 30.92          & 29.53          \\
ECI-cutoff            & 89.8 & 53.12          & 44.36          & 89.7 & \textbf{19.84} & 17.63          & 89.6 & 30.71          & 28.11          \\
ECI-integral          & 89.8 & \textbf{52.36} & \textbf{43.28} & 89.7 & 19.93          & 17.31          & 89.6 & \textbf{30.42} & \textbf{28.02}  \\ \hline
\end{tabular}
\end{center}
\vspace{-1em}
\end{table}

As for ECI, its performance is outstanding, especially in the control of set width. Compared to existing methods, ECI can provide shorter and more accurate prediction sets with little loss of coverage. ECI-cutoff extends ECI by introducing a truncation threshold to further reduce the redundancy of prediction sets. The experimental data also show that ECI-cutoff achieves the shortest set width of all methods in general. ECI-integral extends ECI by integrating the information from longer past time, leading to a better balance between coverage rate and set width.

\Cref{figure amazon prophet coverage} shows the coverage results on Amazon stock dataset with Prophet as the base predictor. Throughout the entire period, the more effectively a method maintains the nominal level (here is $90\%$), the more valid the method is. We can see ACI generally having larger oscillations, and OGD, and SF-OGD become increasingly oscillatory over time. \Cref{figure amazon prophet sets} compares the prediction sets of the variants of ECI with those of other methods. Consistent with the quantitative results, the variants of ECI have the shortest prediction sets.

\vspace{-0.9em}
\begin{figure*}[ht]
  \centering
  \includegraphics[width=1\textwidth]{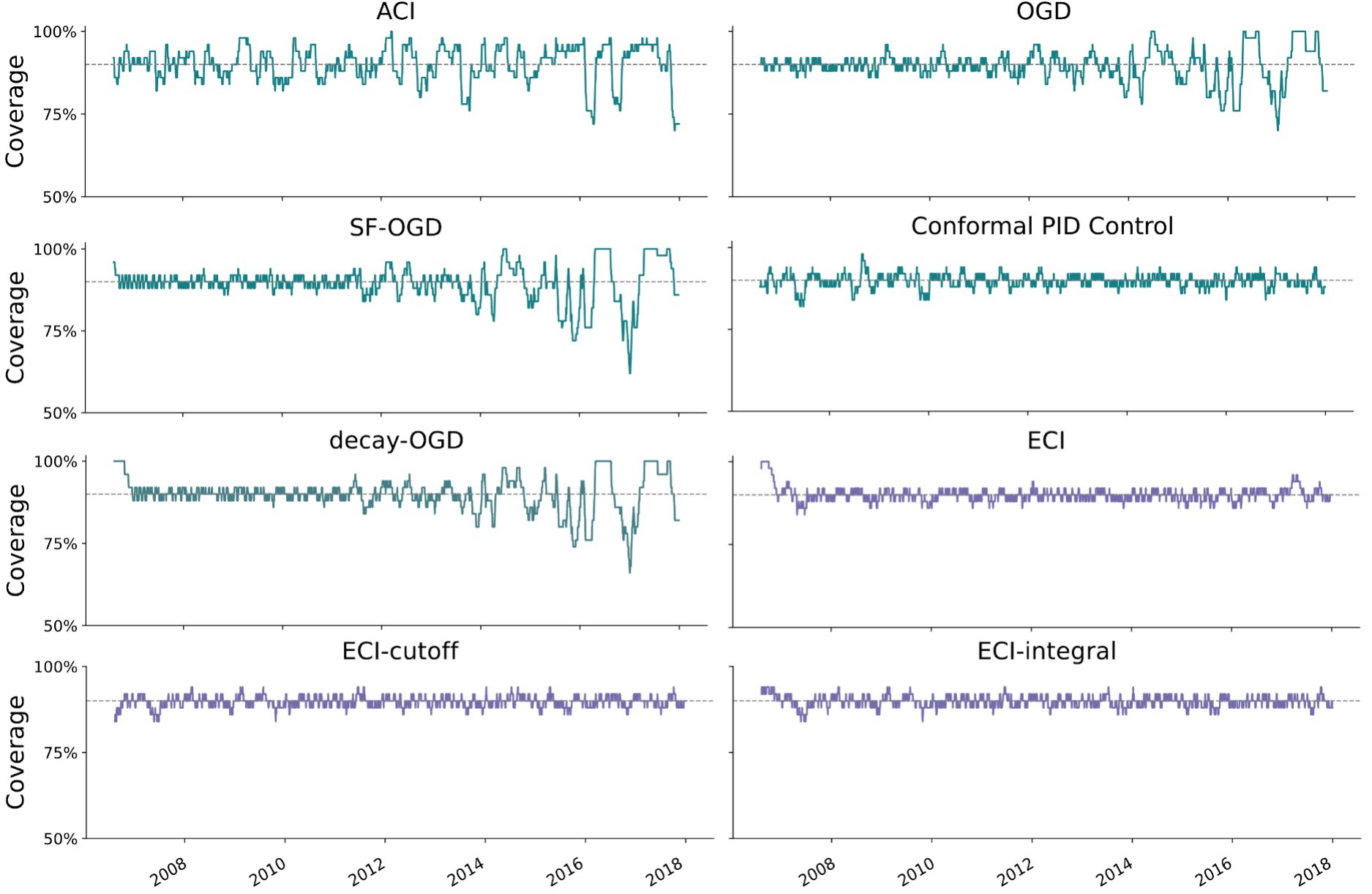}
  \vspace{-0.5em}
  \caption{Comparison results of coverage rate on Amazon stock dataset with Prophet model.}
  \vspace{-0.5em}
  \label{figure amazon prophet coverage}
\end{figure*}

\vspace{-0.5em}
\begin{figure*}[ht]
  \centering
  \includegraphics[width=1\textwidth]{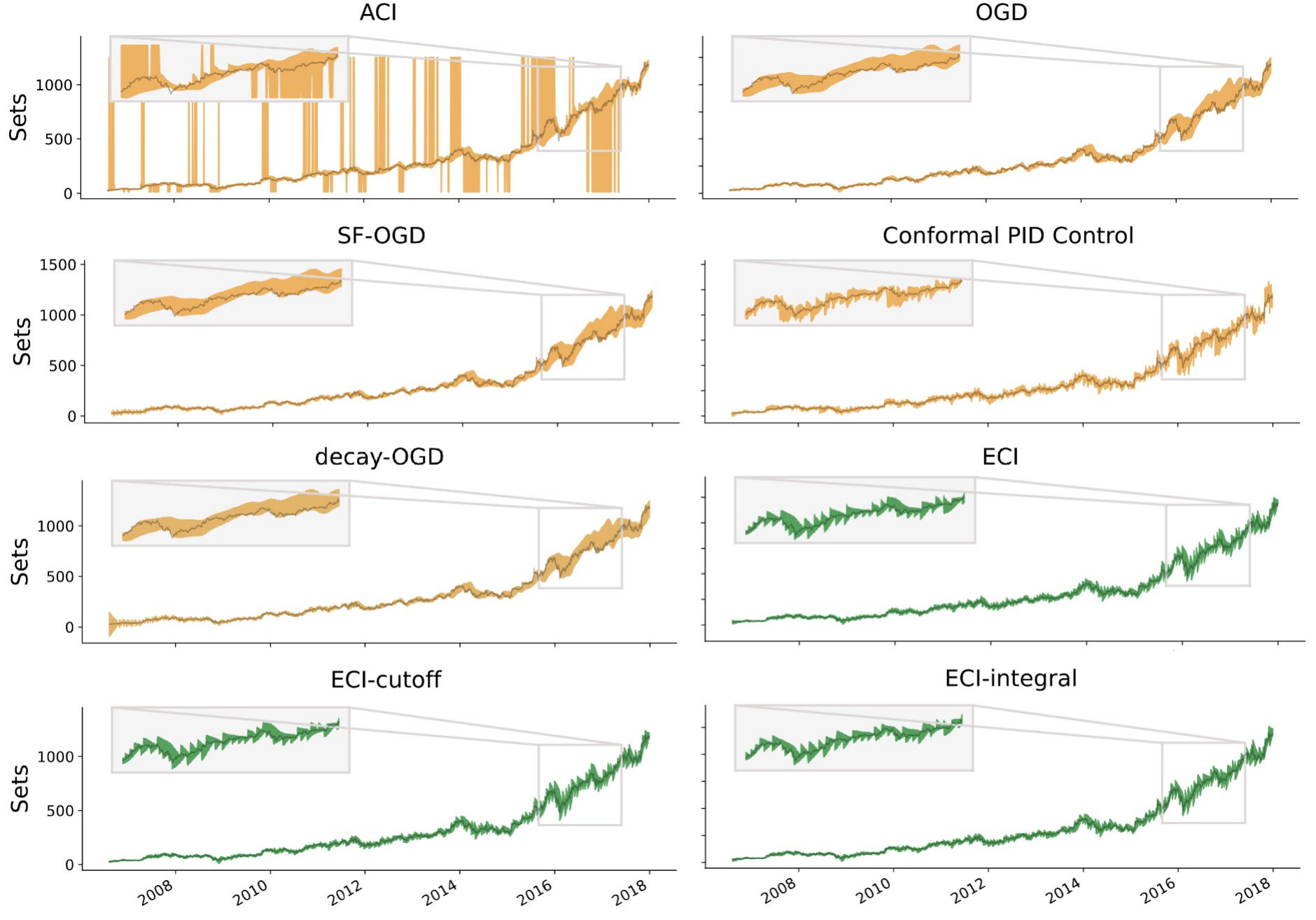}
  \vspace{-0.5em}
  \caption{Comparison results of prediction sets on Amazon stock dataset with Prophet model.}
  \vspace{-0.5em}
  \label{figure amazon prophet sets}
\vspace{-1em}
\end{figure*}

\begin{table}[ht]
\caption{The experimental results in the electricity demand dataset with nominal level $\alpha = 10\%$.}
\label{table elec2}
\setlength{\tabcolsep}{1.25mm} 
\renewcommand{\arraystretch}{1.15} 
\begin{center}
\small
\begin{tabular}{c|ccc|ccc|ccc}
\hline
            & \multicolumn{3}{c|}{Prophet Model}            & \multicolumn{3}{c|}{AR Model}                  & \multicolumn{3}{c}{Theta Model}                \\
Method &
  \begin{tabular}[c]{@{}c@{}}Coverage\\ ( \%)\end{tabular} &
  \begin{tabular}[c]{@{}c@{}}Average\\ width\end{tabular} &
  \begin{tabular}[c]{@{}c@{}}Median\\ width\end{tabular} &
  \begin{tabular}[c]{@{}c@{}}Coverage\\ ( \%)\end{tabular} &
  \begin{tabular}[c]{@{}c@{}}Average\\ width\end{tabular} &
  \begin{tabular}[c]{@{}c@{}}Median\\ width\end{tabular} &
  \begin{tabular}[c]{@{}c@{}}Coverage\\ ( \%)\end{tabular} &
  \begin{tabular}[c]{@{}c@{}}Average\\ width\end{tabular} &
  \begin{tabular}[c]{@{}c@{}}Median\\ width\end{tabular} \\ \hline
ACI        & 90.1 & $\infty$   & 0.443  & 90.1 & $\infty$   & 0.105  & 90.2 & $\infty$   & \textbf{0.055} \\
OGD        & 89.8 & 0.433 & 0.435  & 90.0 & 0.133 & 0.115  & 90.1 & 0.081 & 0.075          \\
SF-OGD     & 89.9 & 0.419 & 0.426  & 90.0 & 0.129 & 0.116  & 90.3 & 0.106 & 0.095          \\
decay-OGD             & 90.1 & 0.531          & 0.521          & 90.1 & 0.122          & 0.099          & 90.0   & 0.100          & 0.059          \\
PID                   & 90.1 & \textbf{0.207} & \textbf{0.177} & 90.0   & 0.434          & 0.432          & 89.9 & 0.413          & 0.411          \\
ECI                   & 90.0   & 0.384          & 0.382          & 90.0   & \textbf{0.117} & 0.098          & 89.9 & \textbf{0.071} & \textbf{0.055} \\
ECI-cutoff            & 90.0   & 0.405          & 0.396          & 90.2 & 0.118          & \textbf{0.096} & 90.1 & 0.072          & \textbf{0.055} \\
ECI-integral          & 90.1 & 0.402          & 0.398          & 90.0   & \textbf{0.117} & 0.098          & 89.9 & 0.072          & \textbf{0.055} \\ \hline
\end{tabular}
\end{center}
\end{table}

\subsection{Results in energy domain}
\vspace{-0.5em}
\label{results in energy domain}
Then we consider the uncertainty problem of electricity demand. The dataset measures electricity demand in New South Wales, collected at half-hour increments from May 7th, 1996 to December 5th, 1998. All values are normalized in $[0, 1]$.

\Cref{table elec2} shows that ECI and ECI-integral have the shortest prediction sets with AR and Theta model, even maintaining the highest coverage in the AR model. We can note that PID stands out with Prophet model due to the scorecaster. In fact, This dataset has collected several other variables, such as the demand and price in Victoria, the amount of energy transfer between New South Wales and Victoria, and so on. These are given as covariates to the scorecaster and complement Prophet well. Other methods do not use this information.

\subsection{Results in climate domain}
\vspace{-0.5em}
\label{results in climate domain}
Finally, we consider the uncertainty problem of climate demand. The dataset measures the daily temperature in the city of Delhi over 15 years (from January 1, 2003 to April 24, 2017). \Cref{table daily-climate} shows that ECI-cutoff achieves best performance in general.

\begin{table}[ht]
\caption{The experimental results in the Delhi temperature dataset with nominal level $\alpha = 10\%$.}
\label{table daily-climate}
\setlength{\tabcolsep}{1.25mm} 
\renewcommand{\arraystretch}{1.15} 
\begin{center}
\small
\begin{tabular}{c|ccc|ccc|ccc}
\hline
            & \multicolumn{3}{c|}{Prophet Model}            & \multicolumn{3}{c|}{AR Model}                  & \multicolumn{3}{c}{Theta Model}                \\
Method &
  \begin{tabular}[c]{@{}c@{}}Coverage\\ ( \%)\end{tabular} &
  \begin{tabular}[c]{@{}c@{}}Average\\ width\end{tabular} &
  \begin{tabular}[c]{@{}c@{}}Median\\ width\end{tabular} &
  \begin{tabular}[c]{@{}c@{}}Coverage\\ ( \%)\end{tabular} &
  \begin{tabular}[c]{@{}c@{}}Average\\ width\end{tabular} &
  \begin{tabular}[c]{@{}c@{}}Median\\ width\end{tabular} &
  \begin{tabular}[c]{@{}c@{}}Coverage\\ ( \%)\end{tabular} &
  \begin{tabular}[c]{@{}c@{}}Average\\ width\end{tabular} &
  \begin{tabular}[c]{@{}c@{}}Median\\ width\end{tabular} \\ \hline
ACI          & 91.0 & $\infty$   & 8.49  & 90.0 & $\infty$   & 6.06  & 90.2 & $\infty$   & 6.48 \\
OGD          & 90.4 & 7.54  & 7.60  & 90.1 & 6.82  & 6.10  & 90.0 & 6.36  & 6.30          \\
SF-OGD       & 90.0 & 7.17  & 7.08  & 90.1 & 6.37  & 5.91  & 90.1 & 6.75  & 6.43          \\
decay-OGD             & 90.1 & 8.84          & 8.35          & 90.0   & 6.36          & \textbf{5.67} & 89.9 & 6.56          & 6.18          \\
PID                   & 90.1 & 7.65          & 7.65          & 89.7 & 8.92          & 8.86          & 89.7 & 8.77          & 8.79          \\
ECI                   & 90.0   & 7.20           & 7.22          & 90.1 & 6.39          & 6.10           & 90.0   & 6.41          & 6.27          \\
ECI-cutoff            & 90.1 & \textbf{7.01} & \textbf{6.96} & 90.1 & \textbf{6.28} & 5.97          & 90.0   & \textbf{6.27} & \textbf{6.17} \\
ECI-integral          & 90.0   & 7.21          & 7.30           & 90.2 & 6.39          & 6.11          & 90.0   & 6.38          & 6.26                \\ \hline
\end{tabular}
\end{center}
\end{table}

\vspace{-0.5em}
\subsection{Results in synthetic data}
\vspace{-0.5em}
\label{Results in synthetic data}
In this experiment, we compare the performance of our method with other baselines under a synthetic changepoint setting in \cite{barber2023conformal}. The data $\{X_i, Y_i\}_{i=1}^n$ are generated according to a linear model $Y_t = X_t^T \beta_t + \epsilon_t$, $X_t \sim \mathcal{N}(0, I_4)$, $\epsilon_t \sim \mathcal{N}(0,1)$. And we set: $\beta_t=\beta^{(0)}=(2,1,0,0)^\top$ for $t=1,\ldots,500$; $\beta_t=\beta^{(1)}=(0,-2,-1,0)^\top$ for $t=501,\ldots,1500$; and $\beta_t=\beta^{(2)}=(0,0,2,1)^\top$ for $t=1501,\ldots,2000$.
And two changes in the coefficients happen up to time $2000$.

We compare ECI and its variants with some competing methods about the coverage and prediction set width. \Cref{table synthetic data} show the result of coverage and set width, while the base predictor is Prophet model. In general, ECI-cutoff and ECI-integral achieve best performance.

\begin{table}[ht]
\caption{The experimental results in the synthetic data dataset with nominal level $\alpha = 10\%$.}
\label{table synthetic data}
\setlength{\tabcolsep}{1.25mm} 
\renewcommand{\arraystretch}{1.15} 
\begin{center}
\small
\begin{tabular}{c|ccc|ccc|ccc}
\hline
            & \multicolumn{3}{c|}{Prophet Model}            & \multicolumn{3}{c|}{AR Model}                  & \multicolumn{3}{c}{Theta Model}                \\
Method &
  \begin{tabular}[c]{@{}c@{}}Coverage\\ ( \%)\end{tabular} &
  \begin{tabular}[c]{@{}c@{}}Average\\ width\end{tabular} &
  \begin{tabular}[c]{@{}c@{}}Median\\ width\end{tabular} &
  \begin{tabular}[c]{@{}c@{}}Coverage\\ ( \%)\end{tabular} &
  \begin{tabular}[c]{@{}c@{}}Average\\ width\end{tabular} &
  \begin{tabular}[c]{@{}c@{}}Median\\ width\end{tabular} &
  \begin{tabular}[c]{@{}c@{}}Coverage\\ ( \%)\end{tabular} &
  \begin{tabular}[c]{@{}c@{}}Average\\ width\end{tabular} &
  \begin{tabular}[c]{@{}c@{}}Median\\ width\end{tabular} \\ \hline
ACI                   & 89.9 & $\infty$           & 8.20           & 89.9 & $\infty$           & 8.20           & 89.9 & $\infty$           & 8.43          \\
OGD                   & 90.0   & 8.49          & 8.50           & 89.9 & 8.39          & 8.40           & 89.9 & 8.73          & 8.70           \\
SF-OGD                & 90.0   & 12.48         & 11.56         & 90.0   & 12.58         & 11.69         & 89.9 & 12.70          & 11.88         \\
decay-OGD             & 90.0   & 8.30           & \textbf{8.22} & 90.0   & 8.26          & 8.21          & 90.0   & 8.57          & 8.60           \\
PID                   & 89.7 & 11.02         & 9.64          & 89.9 & 10.83         & 9.35          & 89.7 & 11.23         & 9.78          \\
ECI                   & 89.9 & \textbf{8.16} & 8.25          & 89.9 & 8.17          & 8.26          & 89.8 & 8.55          & 8.68          \\
ECI-cutoff            & 89.8 & 8.31          & 8.44          & 89.9 & \textbf{8.14} & \textbf{8.19} & 89.8 & 8.51          & 8.59          \\
ECI-integral          & 89.8 & 8.25          & 8.37          & 89.9 & 8.16          & 8.23          & 89.8 & \textbf{8.48} & \textbf{8.58}                \\ \hline
\end{tabular}
\end{center}
\end{table}

\vspace{-1em}
\section{Conclusion}
\vspace{-0.5em}
Several approaches have recently been introduced for online conformal inference. A significant limitation of these methods is their neglect of quantifying extent of over/under coverage. In this work, we propose \textit{Error-quantified Conformal Inference} (ECI) to construct prediction sets for time series data. Compared with ACI \citep{aci_gibbs2021adaptive} and its variants, ECI introduces additional smooth feedback by measuring the magnitude of error $s_t - q_t$. ECI can rapidly adapt to the distributional shifts in time series, and yield more tight conformal prediction sets. Theoretically, we establish a finite-sample coverage guarantee for ECI with a fixed learning rate in a short interval and prove a miscoverage bound with arbitrary learning rate. Empirically, we verify our method's effectiveness and efficiency across extensive datasets.

\subsection*{ACKNOWLEDGEMENT}
We would like to thank the anonymous reviewers and area chair for their helpful comments. Changliang Zou was supported by the National Key R\&D Program of China (Grant Nos. 2022YFA1003703, 2022YFA1003800), and the National Natural Science Foundation of China (Grant Nos. 11925106, 12231011, 12326325).

\bibliography{iclr2025_conference}
\bibliographystyle{iclr2025_conference}
\newpage
\appendix
\section{Smoothed ways}
\subsection{Explanations for extra bias}
\label{explanations for extra bias}
Our target is the quantile loss $\ell_t(q)=(\text{err}_t-\alpha)(s_t-q)$.
For simplicity, we denote $h(q)=\text{err}_t-\alpha, g(q)=s_t-q$. Note that $h(q)$ is non-differentiable, so we use smoothed function $f(q)$ to approximate $h(q)$. The fully smoothed method in \Cref{eq:full_smoothed_rule} considers:
\begin{align*}
    \nabla h(q)g(q)&\approx\nabla f(q)g(q)\\
    &=f(q)\nabla g(q)+g(q)\nabla f(q).
\end{align*}
The two terms $f(q)\nabla g(q)$ and $g(q)\nabla f(q)$ both introduce error. Strictly we have
\begin{align*}
\nabla h(q)g(q)&=\lim_{\delta\to 0} \frac{h(q+\delta)g(q+\delta)-h(q+\delta)g(q)+h(q+\delta)g(q)-h(q)g(q)}{\delta}\\
&=\lim_{\delta\to 0} \frac{h(q+\delta)g(q+\delta)-h(q+\delta)g(q)}{\delta}+\lim_{\delta\to 0} \frac{h(q+\delta)g(q)-h(q)g(q)}{\delta}\\
&=h(q)\nabla g(q)+\underbrace{\lim_{\delta\to 0}\frac{h(q+\delta)-h(q)}{\delta}}_{\text{ using $\nabla f$ to approximate  }} g(q)\\
&\approx -(\text{err}_t-\alpha)-\nabla f(s_t-q)(s_t-q).
\end{align*}
Hence, if we only approximate the second term, potential bias may be reduced. The fully smoothed method is better only when the error of the two terms are negatively correlated and cancel out. 

\subsection{Fully smoothed ways}
\label{full-smoothed way}
There is another smoothing technique applied in the quantile regression \citep{fernandes2021smoothing,tan2022high}, which directly smooths the inidcator in subgradient. Based on this, we can have the following udpate rule
\begin{align}\label{eq:conv_smoothed_rule}
    q_{t+1} = q_t + \eta \cdot \big[f(s_t-q_t) - \alpha \big].
\end{align}

To further validate our ideas in \Cref{ECI}, we test the experimental performance of smoothed method and fully smoothed method, as shown in \Cref{amzn_coverage_full_prophet,amzn_coverage_full_ar,amzn_coverage_full_theta}. All experimental setting are aligned with \Cref{experiment}. 

The update rule of fully smoothed method in \Cref{eq:full_smoothed_rule} is
$$
q_{t+1} = q_t + \eta \cdot \bigg[f(s_t-q_t) - \alpha+  (s_t-q_t) \nabla f(s_t-q_t)\bigg].
$$

And the update rule of ECI in \Cref{ECI update} is
$$
q_{t+1} = q_t + \eta \cdot \bigg[\text{err}_t - \alpha+  (s_t-q_t) \nabla f(s_t-q_t)\bigg].
$$

Compared with ECI in \Cref{ECI update}, fully smoothed updates rules in \Cref{eq:full_smoothed_rule} and \Cref{eq:conv_smoothed_rule} do not keep the actual value of indicator function $\text{err}_t$, and brings the bias between smooth function $f(s_t-q_t)$ and $\mathds{1}(s_t>q_t)$. This leads to these method being conservative.

Experimental results also demonstrate it. Since $f(x)$ does not approach $0$ quickly in the early part of the period, and may even be larger than $\alpha$, the coverage rate can not effectively approach $1-\alpha$. It can be seen that smoothed method and fully smoothed method tends to have overly high coverage in the early stages in the \Cref{amzn_coverage_full_prophet,amzn_coverage_full_ar,amzn_coverage_full_theta}. Consequently, it results in wider sets.

\begin{figure*}[h]
  \centering
  \includegraphics[width=1\textwidth]{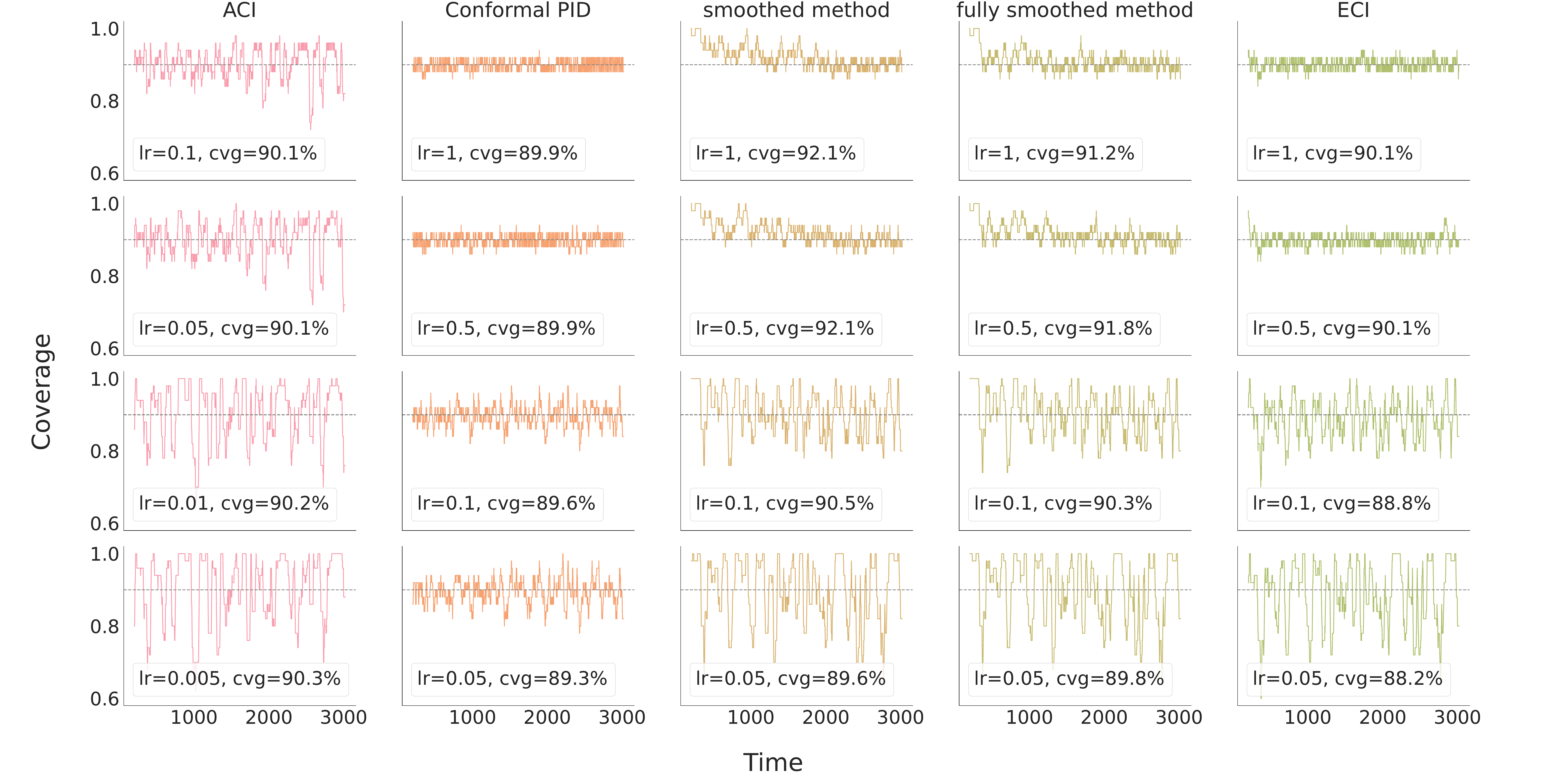}
  \caption{Coverage result on Amazon stock dataset with Prophet model.}
  \label{amzn_coverage_full_prophet}
\end{figure*}

\begin{figure*}[h]
  \centering
  \includegraphics[width=1\textwidth]{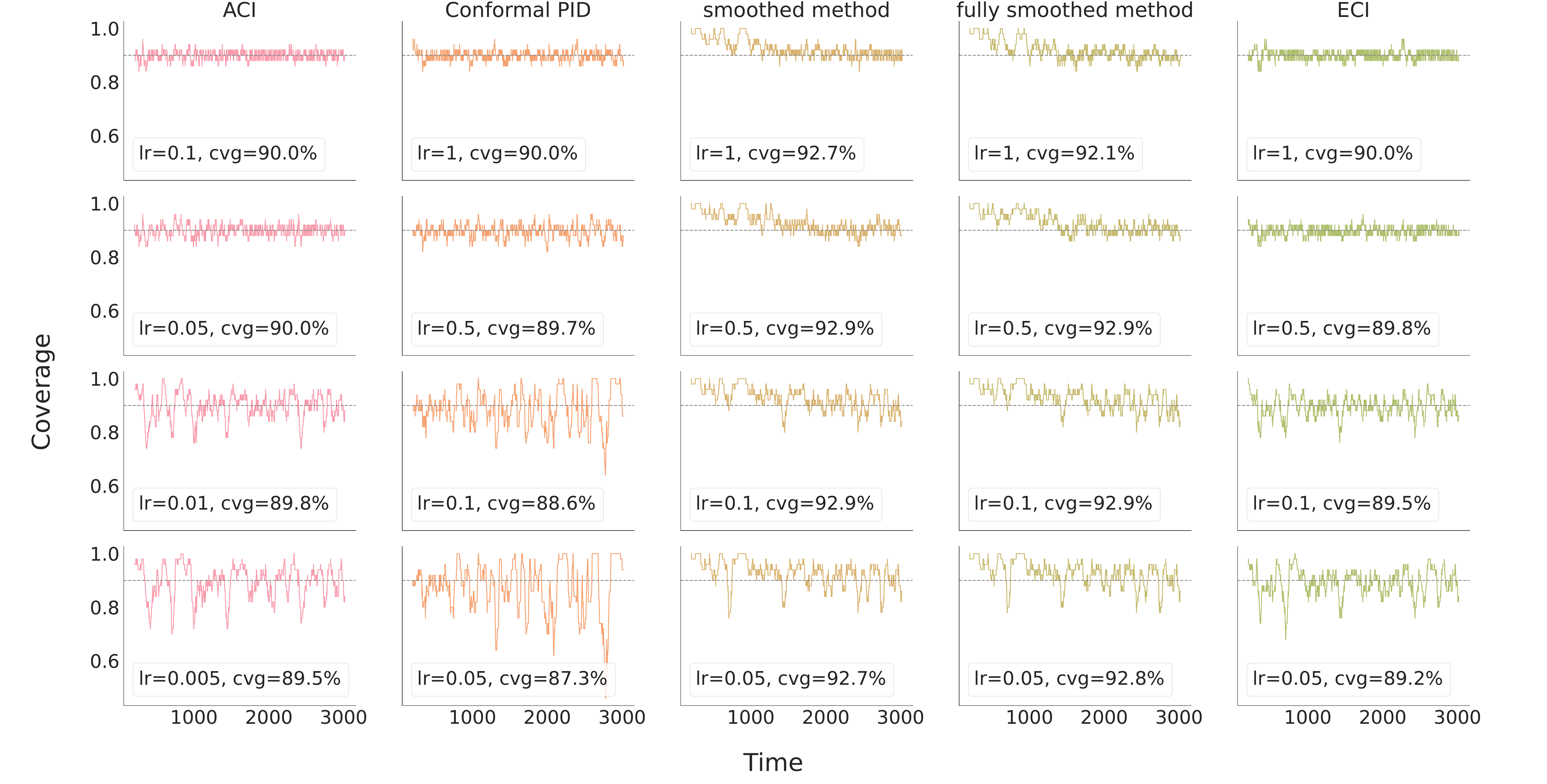}
  \caption{Coverage result on Amazon stock dataset with AR model.}
  \label{amzn_coverage_full_ar}
\end{figure*}

\begin{figure*}[h]
  \centering
  \includegraphics[width=1\textwidth]{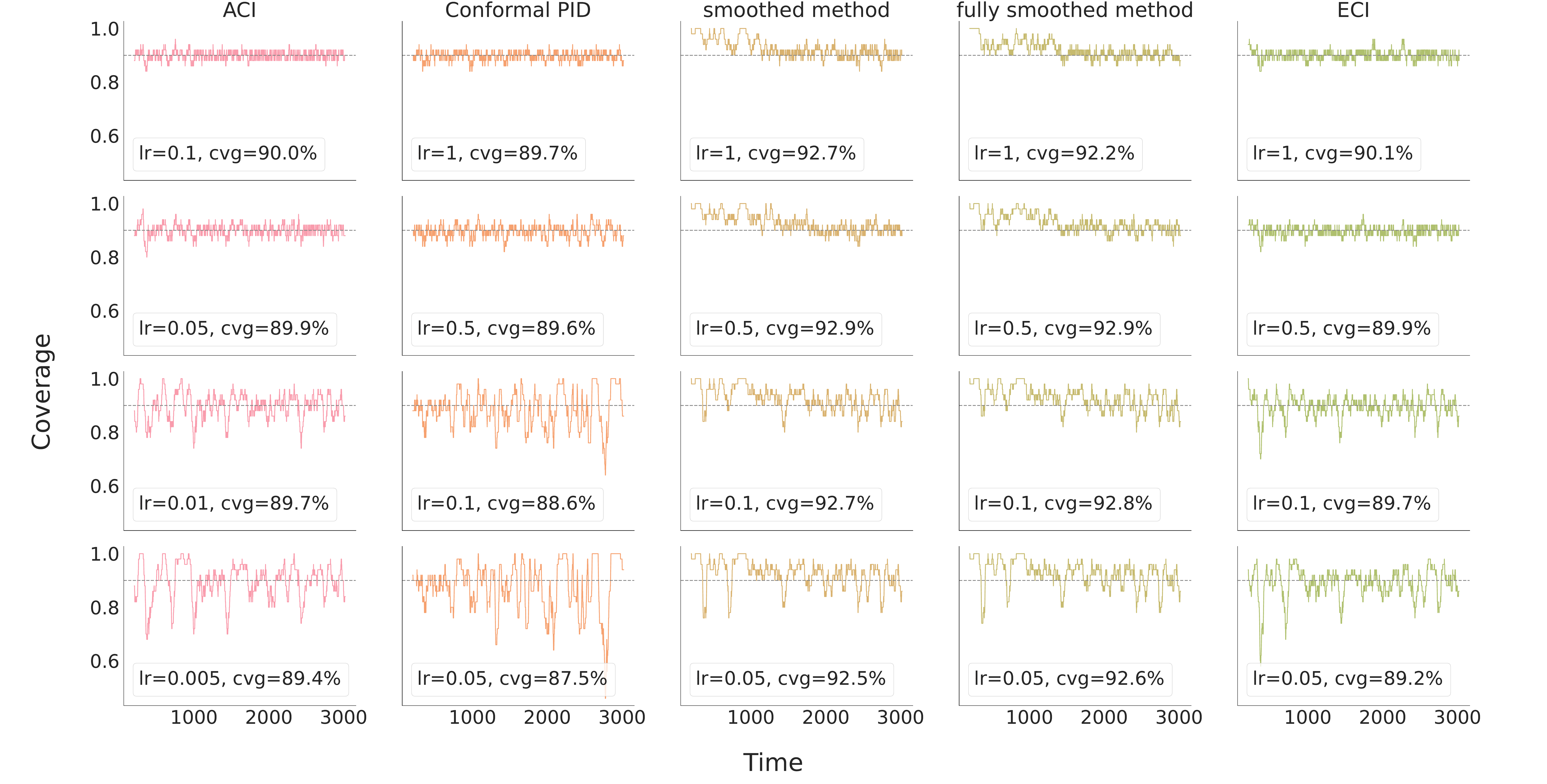}
  \caption{Coverage result on Amazon stock dataset with Theta model.}
  \label{amzn_coverage_full_theta}
\end{figure*}

\newpage
\section{Proofs of main results}
\label{proof}
We show below that under \Cref{assumption1,assumption2}, there exists a bound for $q_t$ and EQ term, depending on learning rate $\eta_t$ and the upper bound of $s_t$. This result is essential in proving our following results of coverage guarantees.
\subsection{Propositions}
\begin{proposition}
\label{proposition1}
      Fix an initial threshold $q_1 \in [0, B]$. Then under \Cref{assumption1,assumption2},  ECI in \eqref{ECI update} with arbitrary  nonnegative learning rate $\eta_t$ satisfies that
    \begin{align}
    \label{bound}
        -(\alpha+\lambda)M_{t-1} \leq q_t \leq B+(1-\alpha+\lambda)M_{t-1} \quad \forall t\geq 1,
    \end{align}
    where $M_0  = 0,$ and $M_t = \max_{1\leq r \leq t}\eta_r$ \ for $t\geq 1$.
\end{proposition}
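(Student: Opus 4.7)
The plan is a simultaneous induction on $t$ for the upper and lower bounds in \eqref{bound}, keyed off the observation that $s_t \in [0, B]$ pins down the miscoverage indicator $\mathrm{err}_t$ as soon as $q_t$ leaves $[0, B]$. I would dispatch the base case $t = 1$ directly from $M_0 = 0$ and $q_1 \in [0, B]$, which reduces both inequalities to $0 \leq q_1 \leq B$.

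For the inductive upper bound I plan to split on whether $q_t \leq B$ or $q_t > B$. In the first regime, I would bound the one-step increment from \eqref{ECI update} crudely by $\eta_t(\mathrm{err}_t - \alpha) + \eta_t \lambda \leq \eta_t(1 - \alpha + \lambda)$, combining $\mathrm{err}_t - \alpha \leq 1 - \alpha$ with $|(s_t - q_t)\nabla f(s_t - q_t)| \leq \lambda$ from \Cref{assumption2}, and then absorb $\eta_t \leq M_t$ to get $q_{t+1} \leq B + (1 - \alpha + \lambda)M_t$. In the second regime, $q_t > B \geq s_t$ forces $\mathrm{err}_t = 0$, and for a nondecreasing $f$ (the intended case, as in the sigmoid example worked out right after \Cref{assumption2}) the EQ contribution $(s_t - q_t)\nabla f(s_t - q_t)$ is nonpositive, so the update can only decrease $q_t$; the inductive hypothesis then transfers via $M_{t-1} \leq M_t$.

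The lower bound follows a symmetric case split on $q_t \geq 0$ versus $q_t < 0$. When $q_t \geq 0$ the worst one-step decrement is $\eta_t(\alpha + \lambda)$, giving $q_{t+1} \geq -(\alpha + \lambda)M_t$. When $q_t < 0 \leq s_t$ the indicator is forced to $1$, the EQ term becomes nonnegative, and the update is a pure increment, so the inductive bound transfers directly.

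The main subtlety I expect to scrutinize is that the EQ term carries the ``correct'' sign in the two out-of-interval regimes ($q_t > B$ and $q_t < 0$). This is automatic whenever $f$ is monotone, which covers every instance of $f$ discussed in the paper. If one insisted on using only the raw magnitude bound $|x\nabla f(x)| \leq \lambda$ from \Cref{assumption2}, an adversarial sign of the EQ contribution during a coverage step could in principle push $q_t$ farther outside $[0, B]$, and the clean induction above would need to be replaced by a more delicate amortization of the EQ term against the $-\eta_t \alpha$ drift accumulated during the excursion---this is the step I would check most carefully.
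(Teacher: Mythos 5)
Your proof is correct and follows essentially the same route as the paper's: induction on $t$, using $|x\nabla f(x)|\le\lambda$ for the crude one-step increment bound and the sign of the EQ term when $q_t$ has left $[0,B]$. The paper's case split is on $s_t\ge q_t$ versus $s_t\le q_t$ rather than on the position of $q_t$ relative to $[0,B]$, but these are equivalent (e.g.\ $s_t\ge q_t$ forces $q_t\le B$ and $s_t\le q_t$ forces $q_t\ge 0$), and the monotonicity of $f$ that you flag as the delicate point is indeed used implicitly in the paper's proof as well (to conclude $q_{t+1}\ge q_t$ in the miscoverage case and $q_{t+1}\le q_t$ in the coverage case).
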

\begin{proof}
We prove this by induction. First, $q_1\in[0,B]$ by assumption. Next fix any $t \geq 1$ and assume $q_t$  lies in the range specified in \eqref{bound}, and consider $q_{t+1}$.

(1): $s_t \geq q_t,$
\begin{align*}
    q_{t+1}&=q_t+\eta_t\left(1-\alpha\right)+\eta_t (s_t-q_t)\ \nabla f(s_t-q_t)\\ &\leq q_t+\eta_t\left(1-\alpha\right)+\eta_t \lambda\\
    & \leq s_t+(1-\alpha+\lambda)\eta_t\\
    & \leq B+(1-\alpha+\lambda)M_{t}
\end{align*}
and $q_{t+1} \geq q_t \geq -M_{t-1}\alpha  \geq -M_t\alpha $.
\\(2): $s_t \leq q_t,$
\begin{align*}
    q_{t+1}&=q_t+\eta_t\left(-\alpha\right)+\eta_t (s_t-q_t)\ \nabla f(s_t-q_t)\\ & \geq
    q_t+\eta_t\left(-\alpha\right)-\eta_t \lambda\\
    & \geq  -(\alpha+\lambda)M_{t},
\end{align*}
and $q_{t+1} \leq q_t \leq B+(1-\alpha+\lambda)M_{t-1} \leq B+(1-\alpha+\lambda)M_{t}.$
\end{proof}
\begin{proposition}
      Under \Cref{assumption1,assumption2}, $|(s_t-q_t)\nabla f(s_t-q_t)|\leq c \left[B+(1-\alpha+\lambda)M_{t-1}\right]$ \  for any $t \geq 1$. 
\end{proposition}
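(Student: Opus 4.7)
The plan is to combine the pointwise bound on $\nabla f$ from Assumption~\ref{assumption2} with the range for $q_t$ established in Proposition~\ref{proposition1} and the score bound from Assumption~\ref{assumption1}.

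First, I would factor the quantity of interest and apply Assumption~\ref{assumption2}, which gives $|\nabla f(x)| \leq c$ for all $x$. This yields
\begin{equation*}
    |(s_t-q_t)\nabla f(s_t-q_t)| \;\leq\; c\,|s_t-q_t|,
\end{equation*}
so the task reduces to bounding $|s_t-q_t|$.

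Next I would use Assumption~\ref{assumption1} together with Proposition~\ref{proposition1}: since $s_t \in [0,B]$ and $-(\alpha+\lambda)M_{t-1}\leq q_t \leq B+(1-\alpha+\lambda)M_{t-1}$, the two one-sided estimates become
\begin{align*}
    s_t - q_t &\;\leq\; B - \bigl(-(\alpha+\lambda)M_{t-1}\bigr) \;=\; B + (\alpha+\lambda)M_{t-1}, \\
    -(s_t - q_t) &\;\leq\; \bigl(B + (1-\alpha+\lambda)M_{t-1}\bigr) - 0 \;=\; B + (1-\alpha+\lambda)M_{t-1}.
\end{align*}
Taking the larger of the two, which is $B+(1-\alpha+\lambda)M_{t-1}$ in the regime $\alpha\le 1/2$ that the paper works in (and absorbing $\alpha\le 1-\alpha$ otherwise), gives $|s_t-q_t|\leq B + (1-\alpha+\lambda)M_{t-1}$. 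Plugging this into the first display yields the claim.

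There is essentially no obstacle here: the argument is a direct juxtaposition of Assumption~\ref{assumption1}, Assumption~\ref{assumption2}, and Proposition~\ref{proposition1}. The only point that warrants a line of care is the asymmetry of the upper and lower bounds on $q_t$; one must verify that the bound of the proposition statement dominates both sides, which it does for the standard choices of $\alpha$ used throughout the paper.
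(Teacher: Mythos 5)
Your argument is correct and is essentially the same as the paper's: both reduce the claim to bounding $|s_t-q_t|$ via $|\nabla f|\leq c$ from Assumption~\ref{assumption2}, then combine $s_t\in[0,B]$ with the two-sided range for $q_t$ from Proposition~\ref{proposition1} and take the larger of the two one-sided estimates. Your explicit remark that the dominance of $B+(1-\alpha+\lambda)M_{t-1}$ over $B+(\alpha+\lambda)M_{t-1}$ requires $\alpha\leq 1/2$ is a point the paper passes over silently, so it is a welcome clarification rather than a deviation.
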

\begin{proof}
Based on $s_t \in [0,B]$ and \Cref{proposition1},
\begin{align*}
    |s_t-q_t|&= \max\{q_t-s_t,s_t-q_t\} \leq \max\{q_t,s_t-q_t\}\\ &\leq \max\{B+(1-\alpha+\lambda)M_{t-1},B+(\alpha+\lambda)M_{t-1}\}
    \\ &\leq B+(1-\alpha+\lambda)M_{t-1}.
\end{align*}
Hence $|(s_t-q_t)\nabla f(s_t-q_t)|\leq c \left[B+(1-\alpha+\lambda)M_{t-1}\right]$
\end{proof}

\subsection{Proof of Theorem \ref{theorem1}}
\textbf{Theorem 1.}
    Assume that $\eta>2NB$, $c<\frac{\min\{\eta,N^2\}}{2N^2\left[B+(1-\alpha+\lambda)\eta\right]}$, where $N=\lceil \frac{1}{\alpha} \rceil$. Under \Cref{assumption1,assumption2}, the prediction set generated by \cref{ECI update} satisfies:
    \begin{equation}
    \lim_{T \to \infty} \frac{1}{T}\sum_{t=1}^T \mathds{1}\{Y_t \notin \hat{C}_t\}\leq \alpha.
\end{equation}

\begin{proof}
    We first prove that for any $t$, $s_t>q_t$ implies $s_{t+i}<q_{t+i},$  $i=1,2,\cdots, N-1$. Note that $\{Y_t \in \hat{C}_t\}=\{|Y_t-\hat{Y}_t|\leq q_t\}$ is meaningless when $q_t<0$. Hence we set $q_t$ to be $\max\{q_t,0\}$ after each update ( which does not affect the validity of our proof ) and assume $q_t\geq 0$. For simplicity,  denote $g(x)=x \nabla f(x),$ then $q_{t+1}=q_t+\eta 
    \left[\text{err}_t-\alpha+g(s_t-q_t)\right]$.
   
    We prove  by induction. For $k=1$,
    \begin{align*}
        q_{t+1}-s_{t+1}&=q_t+\eta \left[1-\alpha+g(s_t-q_t)\right]>\eta (1-\alpha-c \left[B+(1-\alpha+\lambda)\eta\right]) 
    \end{align*}
    Observe that $c \cdot [B+(1-\alpha+\lambda)\eta]< [B+(1-\alpha+\lambda)\eta]\cdot\frac{N^2}{2N^2\left[B+(1-\alpha+\lambda)\eta\right]}=\frac{1}{2},$ hence $$q_{t+1}-s_{t+1}>\eta(1-\alpha-\frac{1}{2})>\eta(\frac{1}{2}-\alpha)\geq 0.$$
    For $2 \leq k \leq N-1$, by recursion:
     \begin{align*}
         q_{t+k}-s_{t+k}&=q_t+\eta(1-k\alpha)+\eta \sum_{i=0}^{k-1} g(s_{t+i}-q_{t+i})-s_{t+k}\\
         & >\eta(1-k\alpha)+\eta \sum_{i=1}^{k-1} g(s_{t+i}-q_{t+i})-s_{t+k} \qquad (  s_t>q_t\geq 0)
         \\ 
         & \geq \eta(1-k\alpha)+c\eta \sum_{i=1}^{k-1} (s_{t+i}-q_{t+i})-s_{t+k}
         \\
         & \geq \eta(1-k\alpha)-c(k-1)\left[B+(1-\alpha+\lambda)\eta\right]-s_{t+k} \\
         & \geq \frac{\eta}{N}-c(N-2)\left[B+(1-\alpha+\lambda)\eta\right]-s_{t+k} \qquad (k\leq N-1, \ \alpha\geq \frac{1}{N} ) \\
         &>\frac{\eta}{N}-\frac{(N-2)\eta}{2N^2}-\frac{\eta}{2N}>0
    \end{align*}
    The last inequality is based on the assumption  $s_{t+k}\leq B <\frac{\eta}{2N}, c<\frac{\eta}{2N^2\left[B+(1-\alpha+\lambda)\eta\right]}$. \\
   
    In conclusion, we have proved that for every miscoverage step $t$, i.e. $Y_t \notin \hat{C}_t$, the next $N-1$ steps of \eqref{ECI update} will satisfy  $Y_{t+i} \in \hat{C}_{t+i}, i=1,2,\cdots N-1$. Therefore, for any $T$,  
     $$ \frac{1}{N}\sum_{t=T+1}^{T+N} \mathds{1}\{Y_t \notin \hat{C}_t\}\leq \frac{1}{N}.$$
\end{proof}

\subsection{Proof of Theorem \ref{theorem2}}
\textbf{Theorem 2.}
  Let $\{\eta_t\}_{t \geq 1}$ be an arbitrary positive sequence. Under \Cref{assumption1,assumption2},  the prediction set generated by \Cref{ECI update} with adaptive learning rate $\eta_t$ satisfies:

\begin{equation}
    \bigg|\frac{1}{T} \sum_{t=1}^T (\text{err}_t-\alpha) \bigg|  \leq  \frac{(B+M_{T-1})\|\Delta_{1:T}\|_1}{T} +c \left[B+(1-\alpha+\lambda)M_{T-1}\right]
\end{equation}

where $\|\Delta_{1:T}\|_1=|\eta_1^{-1}|+\sum_{t=2}^T|\eta_t^{-1}-\eta_{t-1}^{-1}|, M_T = \max_{1\leq r \leq T}\eta_r$.

\begin{proof}
    Denote $\Delta_1=\eta_1^{-1},\text{ and }\Delta_t=\eta_t^{-1}-\eta_{t-1}^{-1}\text{ for all }t\geq1,$
\begin{align*}
\bigg|\frac{1}{T} \sum_{t=1}^T (\text{err}_t-\alpha) \bigg|
& =\left|\frac{1}{T} \sum_{t=1}^T\left(\sum_{r=1}^t \Delta_r\right) \cdot \eta_t\left(\text{err}_t-\alpha\right)\right| \\
& =\left|\frac{1}{T} \sum_{r=1}^T \Delta_r\left(\sum_{t=r}^T \eta_t\left(\text{err}_t-\alpha\right)\right)\right| \\
& =\left|\frac{1}{T} \sum_{r=1}^T \Delta_r\left(q_{T+1}-q_r+\sum_{t=r}^T \eta_t (s_t-q_t) \nabla f(s_t-q_t)\right)\right| \text { by \Cref{ECI update}} \\
& \leq \frac{1}{T}\left|\sum_{r=1}^T \Delta_r (q_{T+1}-q_r)\right|+\frac{1}{T} \left|\sum_{r=1}^T \Delta_r \sum_{t=r}^T \eta_t (s_t-q_t) \nabla f(s_t-q_t) \right| \\
& \leq \frac{1}{T}\left|\sum_{r=1}^T \Delta_r (q_{T+1}-q_r)\right|+\frac{1}{T} \left|\sum_{t=1}^T (s_t-q_t)\nabla f(s_t-q_t)\right|\\
& \leq \frac{1}{T} \|\Delta_{1:T}\|_1   \max _{1 \leq r \leq T}\left|q_{T+1}-q_r\right|+\frac{1}{T} \left(\sum_{t=1}^T |s_t-q_t|\nabla f(s_t-q_t)\right)\\
& \leq \frac{(B+M_{T-1})\|\Delta_{1:T}\|_1}{T} +c \cdot \frac{\sum_{t=1}^T \left[B+(1-\alpha+\lambda)M_{t-1}\right]}{T} \\
& \leq \frac{(B+M_{T-1})\|\Delta_{1:T}\|_1}{T} +c \left[B+(1-\alpha+\lambda)M_{T-1}\right].
\end{align*}
\end{proof}

\section{More details on existing methods}
\label{More details on existing methods}
\subsection{ACI}

Adaptive Conformal Inference (ACI) in \Cref{algorithm1} models the sequentially conformal inference with distribution shift  as a learning problem of a single parameter whose optimal value is varying over time. Assume we have a calibration set $\mathcal{D}_{\mathrm{cal}}\subseteq\{(X_r,Y_r)\}_{1\leq r\leq t-1}$,  
and $\hat{Q}_t(\cdot)$ is  the fitted quantiles of the non-conformity scores:
$$\hat{Q}(p):=\inf\left\{s:\left(\frac1{|\mathcal{D}_{\mathrm{cal}}|}\sum_{(X_r,Y_r)\in\mathcal{D}_{\mathrm{cal}}}\mathds{1}_{\{S(X_r,Y_r)\leq s\}}\right)\geq p\right\}.$$
For prediction set $\hat{C}_t(\alpha):=\{y:S_t(X_t,y)\leq\hat{Q}_t(1-\alpha)\}$, define:$$ \beta_t:=\sup\{\beta:Y_t\in\hat{C}_t(\beta)\}.$$Consider pinball loss  $\ell(\alpha_t,\beta_t)=\rho_\alpha(\beta_t-\alpha_t)$, by gradient descent:
$$\alpha_{t+1}=\alpha_t-\eta\partial_{\alpha_t}\ell(\alpha_t,\beta_t)=\alpha_t+\eta(\alpha-\mathds{1}_{\alpha_t>\beta_t})=\alpha_t+\eta(\alpha-\text{err}_t).$$
 ACI transforms unbounded score sequences into bounded ones, which then
implies long-run coverage for any score sequence. This may, however, come at a cost: ACI can sometimes output infinite or null prediction sets ($\alpha_t<0$ or $\alpha_t>1$). 

\begin{algorithm}
\caption{Adaptive Conformal Inference (ACI)}\label{algorithm1}
\begin{algorithmic}[1]
\Require $\alpha \in (0, 1)$,  $\eta > 0$, $D_{cal}$, init. $\alpha_1 \in \mathbb{R}$
\For{$t \geq 1$}
    \State Observe input $X_t \in \mathcal{X}$
    \State Compute $\hat{Q}_t(1-\alpha_t)$
    \State Return prediction set $\hat{C}_t(\alpha_t):=\{y:S_t(X_t,y)\leq\hat{Q}_t(1-\alpha_t)\}$
    \State Observe true label $Y_t \in \mathcal{Y}$ and compute true radius $\beta_t:=\sup\{\beta:Y_t\in\hat{C}_t(\beta)\}$
    \State Update predicted radius
    \[
    \alpha_{t+1}=\alpha_t+\eta(\alpha-\mathds{1}_{\alpha_t>\beta_t})
    \]
\EndFor
\end{algorithmic}
\end{algorithm}
\subsection{OGD}
Online Gradient Descent (OGD) in \Cref{algorithm2} is an iterative optimization algorithm that updates model parameters incrementally using each new data point, making it suitable for real-time and streaming data applications.
\begin{algorithm}[H]
\caption{Online Gradient Descent (OGD)}\label{algorithm2}
\begin{algorithmic}[1]
\Require $\alpha \in (0, 1)$, base predictor $\hat{f}$, learning rate $\eta > 0$, init. $q_1 \in \mathbb{R}$
\For{$t \geq 1$}
    \State Observe input $X_t \in \mathcal{X}$ 
    \State Return prediction set $\hat{C}_t(X_t, q_t)=[\hat{f}_{t}(X_{t})-q_t,\hat{f}_{t}(X_{t})+q_t]$
    \State Observe true label $Y_t \in \mathcal{Y}$ and compute true radius $s_t = \inf\{s \in \mathbb{R} : Y_t \in \hat{C}_t(X_t, s)\}$
    \State Compute quantile loss $\ell^{(t)}(q_t) = \rho_{1-\alpha}(s_t-q_t)$
    \State Update predicted radius
    \[
    q_{t+1} = q_t - \eta \nabla \ell^{(t)}(q_t)
    \]
\EndFor
\end{algorithmic}
\end{algorithm}

\subsection{SF-OGD}
 Scale-Free OGD (SF-OGD) that we summarize in  \Cref{algorithm3} is a variant of OGD that decays its effective learning rate based on cumulative  past gradient norms. Suppose $\hat{f}$ is a base predictor and choose $\widehat{C}_{t}(X_{t},q):=[\hat{f}_{t}(X_{t})-q,\hat{f}_{t}(X_{t})+q]$ to be a prediction set around $\hat{f}_{t}(X_{t})$.

\begin{algorithm}[H]
\caption{Scale-Free Online Gradient Descent (SF-OGD)}\label{algorithm3}
\begin{algorithmic}[1]
\Require $\alpha \in (0, 1)$, base predictor $\hat{f}$, learning rate $\eta > 0$, init. $q_1 \in \mathbb{R}$
\For{$t \geq 1$}
    \State Observe input $X_{t+1} \in \mathcal{X}$
    \State Return prediction set $\hat{C}_t(X_{t}, q_{t})=[\hat{f}_{t}(X_{t})-q_{t+1},\hat{f}_{t}(X_{t})+q_{t}]$ 
    \State Observe true label $Y_t \in \mathcal{Y}$
    \State Compute true radius $s_t = \inf\{s \in \mathbb{R} : Y_t \in \hat{C}_t(X_t, s)\}$
    \State Compute quantile loss $\ell^{(t)}(q_t) = \rho_{1-\alpha}(s_t-q_t)$
    \State Update predicted radius
    \[
    q_{t+1} = q_t - \eta \frac{\nabla \ell^{(t)}(q_t)}{\sqrt{\sum_{i=1}^{t} \|\nabla \ell^{(i)}(q_i)\|_2^2}}
    \]
\EndFor
\end{algorithmic}
\end{algorithm}

\subsection{decay-OGD}
 Online conformal prediction with decaying step sizes (decay-OGD) in \Cref{algorithm_decay} is a variant of OGD that decays its effective learning rate based on time steps. Suppose $\hat{f}$ is a base predictor and choose $\widehat{C}_{t}(X_{t},q):=[\hat{f}_{t}(X_{t})-q,\hat{f}_{t}(X_{t})+q]$ to be a prediction set around $\hat{f}_{t}(X_{t})$.

\begin{algorithm}[H]
\caption{Online conformal prediction with decaying step sizes (decay-OGD)}\label{algorithm_decay}
\begin{algorithmic}[1]
\Require $\alpha \in (0, 1)$, base predictor $\hat{f}$, learning rate $\eta > 0$, init. $q_1 \in \mathbb{R}$
\For{$t \geq 1$}
    \State Observe input $X_{t+1} \in \mathcal{X}$
    \State Return prediction set $\hat{C}_t(X_{t}, q_{t})=[\hat{f}_{t}(X_{t})-q_{t+1},\hat{f}_{t}(X_{t})+q_{t}]$ 
    \State Observe true label $Y_t \in \mathcal{Y}$
    \State Compute true radius $s_t = \inf\{s \in \mathbb{R} : Y_t \in \hat{C}_t(X_t, s)\}$
    \State Compute quantile loss $\ell^{(t)}(q_t) = \rho_{1-\alpha}(s_t-q_t)$
    \State Compute a decaying step size $\eta_t = \eta \cdot t^{-\frac{1}{2}-\epsilon}$
    \State Update predicted radius
    \[
    q_{t+1} = q_t - \eta_t \nabla \ell^{(t)}(q_t)
    \]
\EndFor
\end{algorithmic}
\end{algorithm}

\subsection{Conformal PID}
Conformal PID in \Cref{algorithm4} is bulit upon ideas from conformal prediction and control theory.
It is able to prospectively model conformal scores in an online setting, and adapt to the presence of systematic errors due to seasonality, trends, and general distribution shifts. Conformal PID consists of three parts: quantile tracking, error integration and scorecasting.
For prediction set: $$\mathcal{C}_t=\{y\in\mathcal{Y}:S_t(x_t,y)\leq q_t\},$$ consider the optimization:
$$
    \underset{q\in\mathbb{R}}{\operatorname*{minimize}}\sum_{t=1}^T\rho_{1-\alpha}(s_t-q).
$$
Conformal PID solves it via online gradient method:
$$
\begin{aligned}
q_{t+1} &=q_t+\eta\partial\rho_{1-\alpha}(s_t-q_t) \\&
=q_t+\eta(\mathds{1}(s_t>q_t)-\alpha)=q_t+\eta(\mathrm{err}_t-\alpha), 
\end{aligned}
$$
which is called quantile tracking. In parctice,  the learning rate  is not fixed. They choose it in an adaptive way: $\eta_t=\eta \cdot (\max\{s_{t-w+1},\cdots,s_t\}-\min\{s_{t-w+1},\cdots,s_t\})$, where $\eta$ is a scale parameter. 
The error integration incorporates the past error to  further stabilize the coverage   :$$q_{t+1}=r_t\bigg(\sum_{i=1}^t(\text{err}_i-\alpha)\bigg).$$
The last step is to add up a scorecasting term: $g^{'}_t$, a model that can take
advantage of any leftover signal that is not captured like seasonality, trends, and exogenous covariates. Scorecastor needs be trained and can be Theta or other models.

Putting the three steps together is the conformal PID method:
$$q_{t+1}=g_t'+\eta_t(\text{err}_t-\alpha)+r_t\bigg(\sum_{i=1}^t(\text{err}_i-\alpha)\bigg)$$
\begin{algorithm}
\caption{Conformal PID}\label{algorithm4}
\begin{algorithmic}[1]
\Require $\alpha \in (0, 1)$,  base predictor $\hat{f}$, trained scorecastor $g'$, $\eta > 0$, window length $w$, init. $q_1 \in \mathbb{R}$
\For{$t \geq 1$}
    \State Observe input $X_t \in \mathcal{X}$
    \State Return prediction set $\mathcal{C}_t=\{y\in\mathcal{Y}:S_t(x_t,y)\leq q_t\}$
    \State Observe true label $Y_t \in \mathcal{Y}$ and compute score $s_t:=S_t(X_t,Y_t)$,
    \State Compute learning rate $\eta_t=\eta \cdot (\max\{s_{t-w+1},\cdots,s_t\}-\max\{s_{t-w+1},\cdots,s_t\})$
    \State Compute the integrator  
    \[
    r_t= r_t\bigg(\sum_{i=1}^t(\text{err}_i-\alpha)\bigg)
    \]
    \State Compute the scorecastor $g_t'(X_t)$
    \State Update :
    $$q_{t+1}=g_t'(X_t)+\eta_t(\text{err}_t-\alpha)+r_t\bigg(\sum_{i=1}^t(\text{err}_i-\alpha)\bigg)$$
\EndFor
\end{algorithmic}
\end{algorithm}

\subsection{SPCI}
The sequential predictive conformal inference (SPCI) outlined in \Cref{algorithm5} cast the conformal prediction set as predicting the quantile of a future residual and adaptively re-estimate the conditional quantile of non-conformity scores. 
Suppose $\hat{f}$ is a pre-trained model, $\widehat{Q}_t(p)$ is an estimator of $Q_t(p)$,  the $p-$th quantile of the residual $\hat{\epsilon}_t=|Y_t-\hat{f}(X_t)|$. SPCI sets is $\hat{C}_{t-1}(X_t)$ is defined as:
$$[\hat{f}(X_t)+\widehat{Q}_t(\hat{\beta}),\hat{f}(X_t)+\widehat{Q}_t(1-\alpha+\hat{\beta})],$$where $\hat{\beta}$ minimizes set width:$$\hat{\beta}=\arg\min_{\beta\in[0,\alpha]}(\widehat{Q}_t(1-\alpha+\beta)-\widehat{Q}_t(\beta)).$$ 
\begin{algorithm}[H]
\caption{Sequential Predictive Conformal Inference (SPCI)}\label{algorithm5}
\begin{algorithmic}[1]
\Statex \hspace{-\algorithmicindent} \textbf{Require:} Training data $\{(X_t, Y_t)\}_{t=1}^T$, prediction algorithm $\mathcal{A}$, significance level $\alpha$, quantile regression algorithm $\mathcal{Q}$.
\Statex \hspace{-\algorithmicindent} \textbf{Output:}  Prediction sets $\hat{C}_{t-1}(X_t)=[\hat{f}(X_t)+\widehat{Q}_t(\hat{\beta}),\hat{f}(X_t)+\widehat{Q}_t(1-\alpha+\hat{\beta})], t > T$
\State Obtain $\hat{f}$ and prediction residuals $\hat{\varepsilon}$ with $\mathcal{A}$ and $\{(X_t, Y_t)\}_{t=1}^T$
\For{$t > T$}
    \State Use quantile regression to obtain $\hat{Q}_t \leftarrow \mathcal{Q}(\hat{\varepsilon})$
    \State Obtain prediction set $\hat{C}_{t-1}(X_t)$ 
    \State Obtain new residual $\hat{\varepsilon}_t$
    \State Update residuals $\hat{\varepsilon}$ by sliding one index forward (i.e., add $\hat{\varepsilon}_t$ and remove the oldest one)
\EndFor
\end{algorithmic}
\end{algorithm}

\section{Ablation study of hyperparameters}
\label{Ablation study of hyperparameters}
\subsection{Effects of different scale parameter in Sigmoid function}
\label{Effect of different scale parameter in Sigmoid function}
We explore the effects of different scale parameter $c$ in Sigmoid function. We conduct the ablation study on Amazon stock dataset. \Cref{c_plot_cvg,c_plot_width} show the line graphs across different $c$. When base predictor is not well (such as Prophet), as $c$ increases, the sets tighten, but the coverage decreases. For AR model and Theta model, there are almost identical performance when $c$ is in a reasonable range. Note that, when $c$ is large, numeric overflow will encounter because of scalar power. As $c$ varies, the changes in coverage and width are relatively small and the two metrics are actually a trade-off. When the coverage is fixed, the width of our methods with $c \in \{0.1, 0.5, 1, 1.5, 2\}$ are consistently shorter than other methods (see \Cref{table amazon}). In general, our methods are less sensitive to the choice of scale parameter $c$ due to the trade-off between coverage and width. 

\begin{figure*}[h]
  \centering
  \includegraphics[width=0.8\textwidth]{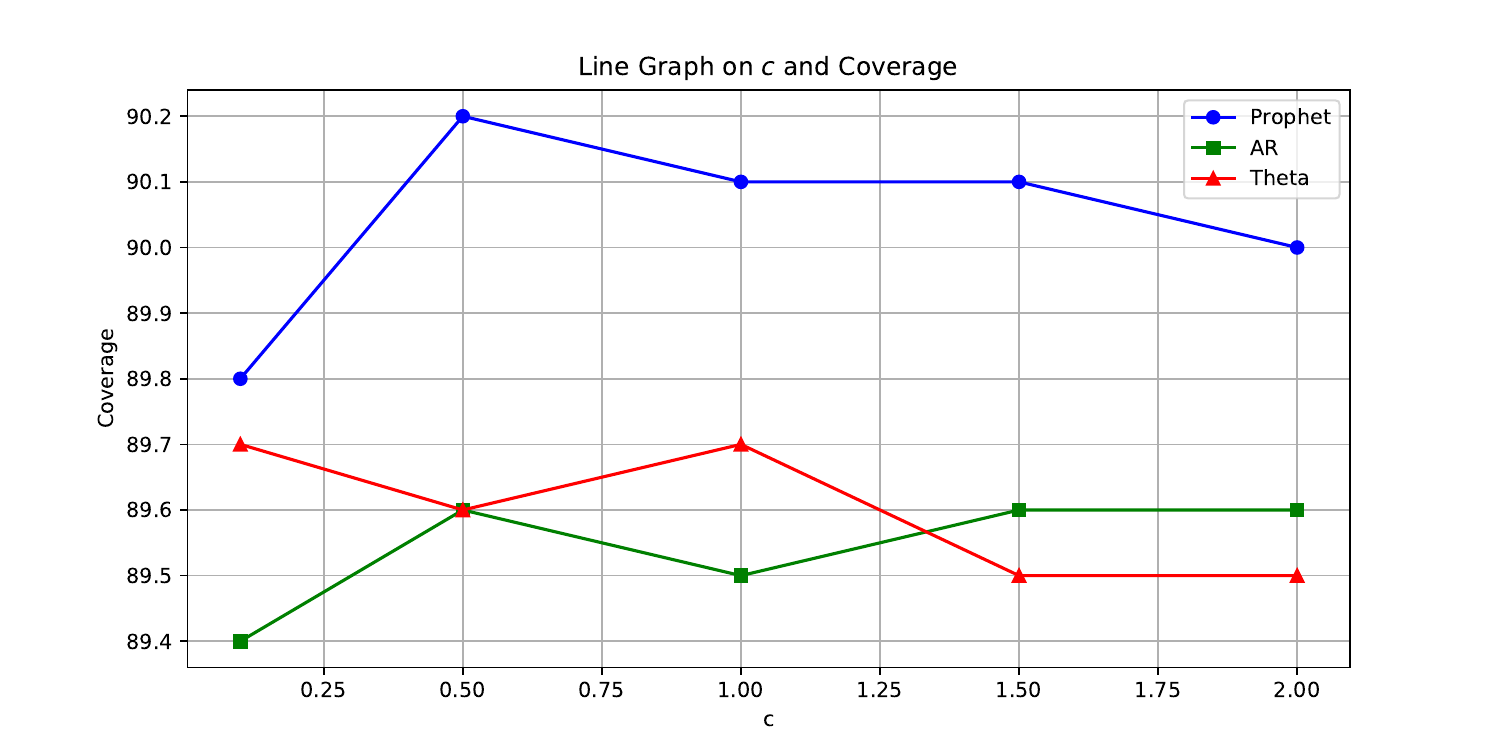}
  \caption{Coverage result on different scale parameter $c$ in Sigmoid function.}
  \label{c_plot_cvg}
\end{figure*}
\begin{figure*}[h]
  \centering
  \includegraphics[width=0.8\textwidth]{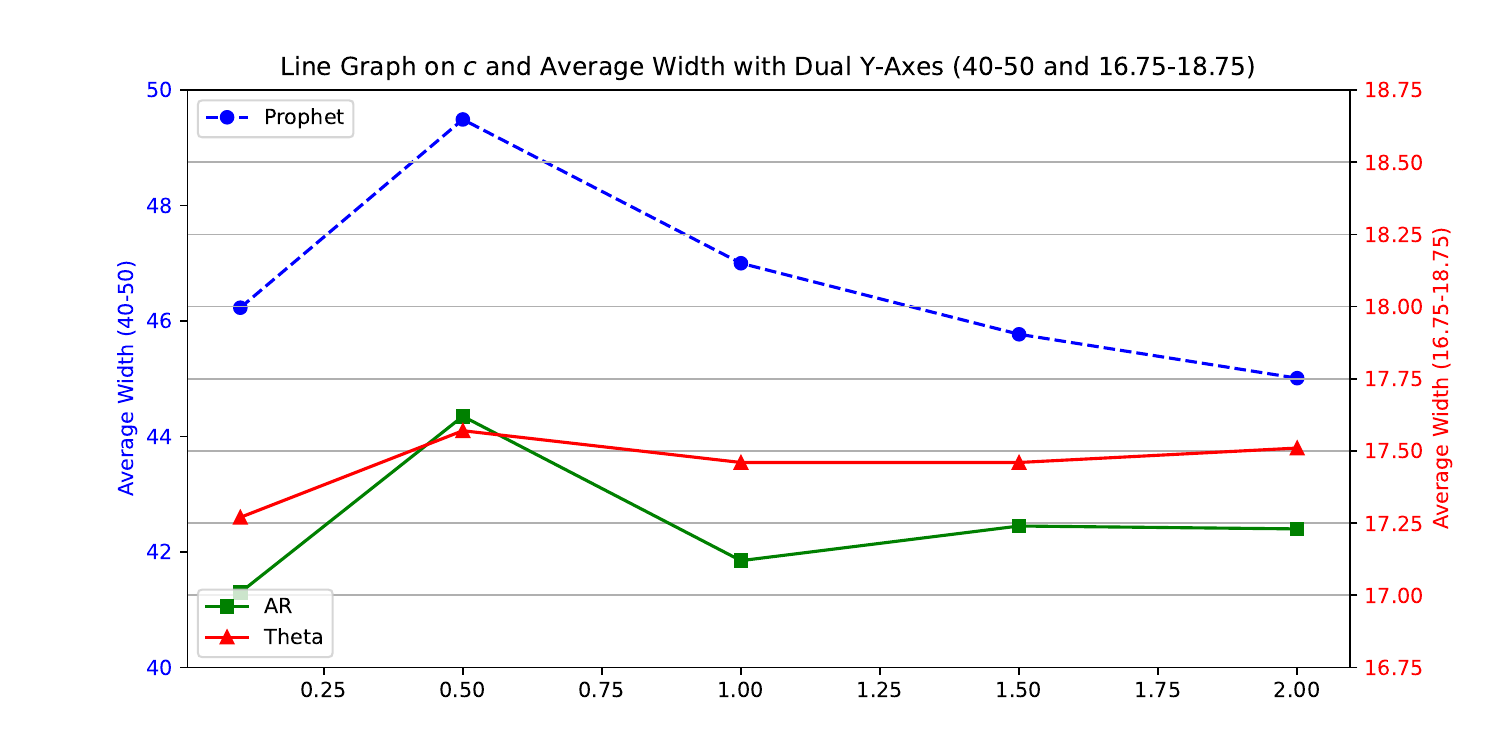}
  \caption{Set width result on different scale parameter $c$ in Sigmoid function.}
  \label{c_plot_width}
\end{figure*}

\subsection{Effects of different window length}
\label{Effect of different window length}
We also explore the effects of different window length $w$ in adaptive learning rates $\eta_t$ and adaptive cutoff threshold $h_t$. \Cref{table w in amazn stock,table w in google stock,table w in synthetic data} show the experimental results of window length $w$ in ECI-cutoff. In general, $w=100$ has the shortest average width with coverage greater than 89.5\%. It is worth noting that in the synthetic dataset, there is a clear trend of increasing width and coverage as $w$ increases. This is because the main influencing factor at this time is the learning rate, and an increase in $w$ leads to a larger adaptive learning rate.

\begin{table}[ht]
\caption{The ablation experimental results of window length $w$ in the synthetic data dataset with nominal level $\alpha = 10\%$.}
\label{table w in synthetic data}
\setlength{\tabcolsep}{1.25mm} 
\renewcommand{\arraystretch}{1.15} 
\begin{center}
\small
\begin{tabular}{c|ccc|ccc|ccc}
\hline
            & \multicolumn{3}{c|}{Prophet Model}            & \multicolumn{3}{c|}{AR Model}                  & \multicolumn{3}{c}{Theta Model}                \\
\ $w$\  &
  \begin{tabular}[c]{@{}c@{}}Coverage\\ ( \%)\end{tabular} &
  \begin{tabular}[c]{@{}c@{}}Average\\ width\end{tabular} &
  \begin{tabular}[c]{@{}c@{}}Median\\ width\end{tabular} &
  \begin{tabular}[c]{@{}c@{}}Coverage\\ ( \%)\end{tabular} &
  \begin{tabular}[c]{@{}c@{}}Average\\ width\end{tabular} &
  \begin{tabular}[c]{@{}c@{}}Median\\ width\end{tabular} &
  \begin{tabular}[c]{@{}c@{}}Coverage\\ ( \%)\end{tabular} &
  \begin{tabular}[c]{@{}c@{}}Average\\ width\end{tabular} &
  \begin{tabular}[c]{@{}c@{}}Median\\ width\end{tabular} \\ \hline
10  & 89.4 & 8.18 & 8.21 & 89.4 & 8.22 & 8.30  & 89.4 & 8.41 & 8.43 \\
50  & 89.8 & 8.42 & 8.45 & 89.6 & 8.32 & 8.37 & 89.6 & 8.61 & 8.57 \\
100 & 89.8 & 8.31 & 8.44 & 89.9 & 8.14 & 8.19 & 89.8 & 8.51 & 8.59 \\
150 & 89.9 & 8.47 & 8.46 & 89.9 & 8.43 & 8.47 & 89.9 & 8.83 & 8.82 \\
200 & 89.9 & 8.56 & 8.54 & 89.9 & 8.33 & 8.40  & 89.9 & 8.83 & 8.84 \\ \hline
\end{tabular}
\end{center}
\end{table}

\begin{table}[ht]
\caption{The ablation experimental results of window length $w$ in the Amazon stock dataset with nominal level $\alpha = 10\%$.}
\label{table w in amazn stock}
\setlength{\tabcolsep}{1.25mm} 
\renewcommand{\arraystretch}{1.15} 
\begin{center}
\small
\begin{tabular}{c|ccc|ccc|ccc}
\hline
            & \multicolumn{3}{c|}{Prophet Model}            & \multicolumn{3}{c|}{AR Model}                  & \multicolumn{3}{c}{Theta Model}                \\
\ $w$\  &
  \begin{tabular}[c]{@{}c@{}}Coverage\\ ( \%)\end{tabular} &
  \begin{tabular}[c]{@{}c@{}}Average\\ width\end{tabular} &
  \begin{tabular}[c]{@{}c@{}}Median\\ width\end{tabular} &
  \begin{tabular}[c]{@{}c@{}}Coverage\\ ( \%)\end{tabular} &
  \begin{tabular}[c]{@{}c@{}}Average\\ width\end{tabular} &
  \begin{tabular}[c]{@{}c@{}}Median\\ width\end{tabular} &
  \begin{tabular}[c]{@{}c@{}}Coverage\\ ( \%)\end{tabular} &
  \begin{tabular}[c]{@{}c@{}}Average\\ width\end{tabular} &
  \begin{tabular}[c]{@{}c@{}}Median\\ width\end{tabular} \\ \hline
10  & 90.1 & 40.98 & 29.68 & 89.9 & 16.05 & 11.76 & 89.7 & 16.24 & 11.84 \\
50  & 89.3 & 36.14 & 27.34 & 89.0   & 16.29 & 12.15 & 88.8 & 16.28 & 11.91 \\
100 & 89.7 & 43.46 & 29.98 & 89.3 & 16.91 & 12.63 & 89.6 & 17.19 & 12.48 \\
150 & 89.0   & 45.97 & 33.60  & 89.2 & 16.20  & 12.24 & 89.2 & 16.43 & 12.15 \\
200 & 89.1 & 43.99 & 31.66 & 89.4 & 16.24 & 12.30  & 89.1 & 16.28 & 12.48 \\ \hline
\end{tabular}
\end{center}
\end{table}

\begin{table}[ht]
\caption{The ablation experimental results of window length $w$ in the Google stock dataset with nominal level $\alpha = 10\%$.}
\label{table w in google stock}
\setlength{\tabcolsep}{1.25mm} 
\renewcommand{\arraystretch}{1.15} 
\begin{center}
\small
\begin{tabular}{c|ccc|ccc|ccc}
\hline
            & \multicolumn{3}{c|}{Prophet Model}            & \multicolumn{3}{c|}{AR Model}                  & \multicolumn{3}{c}{Theta Model}                \\
\ $w$\  &
  \begin{tabular}[c]{@{}c@{}}Coverage\\ ( \%)\end{tabular} &
  \begin{tabular}[c]{@{}c@{}}Average\\ width\end{tabular} &
  \begin{tabular}[c]{@{}c@{}}Median\\ width\end{tabular} &
  \begin{tabular}[c]{@{}c@{}}Coverage\\ ( \%)\end{tabular} &
  \begin{tabular}[c]{@{}c@{}}Average\\ width\end{tabular} &
  \begin{tabular}[c]{@{}c@{}}Median\\ width\end{tabular} &
  \begin{tabular}[c]{@{}c@{}}Coverage\\ ( \%)\end{tabular} &
  \begin{tabular}[c]{@{}c@{}}Average\\ width\end{tabular} &
  \begin{tabular}[c]{@{}c@{}}Median\\ width\end{tabular} \\ \hline
10  & 89.9 & 50.33 & 42.47 & 91.7 & 22.74 & 22.05 & 89.6 & 33.49 & 29.42 \\
50  & 89.4 & 47.38 & 40.90  & 90.2 & 20.69 & 19.21 & 89.3 & 31.63 & 29.64 \\
100 & 89.8 & 53.12 & 44.36 & 89.7 & 19.84 & 17.63 & 89.6 & 30.71 & 28.11 \\
150 & 89.1 & 55.62 & 47.59 & 90.0   & 20.53 & 17.85 & 89.7 & 30.54 & 28.98 \\
200 & 89.4 & 53.65 & 45.49 & 89.8 & 20.48 & 18.02 & 89.8 & 30.99 & 30.06 \\ \hline
\end{tabular}
\end{center}
\end{table}

\section{Experimental results in synthetic data}
\label{Experimental results in synthetic data}
Following \cite{barber2023conformal}, we test the performance on synthetic data under a changepoint setting. The data $\{X_i, Y_i\}_{i=1}^n$ are generated according to a linear model $Y_t = X_t^T \beta_t + \epsilon_t$, $X_t \sim \mathcal{N}(0, I_4)$, $\epsilon_t \sim \mathcal{N}(0,1)$. And we set:
$$\begin{aligned}
&\beta_t=\beta^{(0)}=(2,1,0,0)^\top,\quad t=1,\ldots,500, \\
&\beta_t=\beta^{(1)}=(0,-2,-1,0)^\top,\ t=501,\ldots,1500, \\
&\beta_t=\beta^{(2)}=(0,0,2,1)^\top,\quad t=1501,\ldots,2000,
\end{aligned}$$
where two changes in the coefficients happen up to time $2000$.

We compare ECI and its variants with some competing methods about the coverage and prediction set width. \Cref{changepoint coverage,changepoint set} show the result of coverage and set width, while the base predictor is Theta model.

\begin{figure*}[h]
  \centering
  \includegraphics[width=1\textwidth]{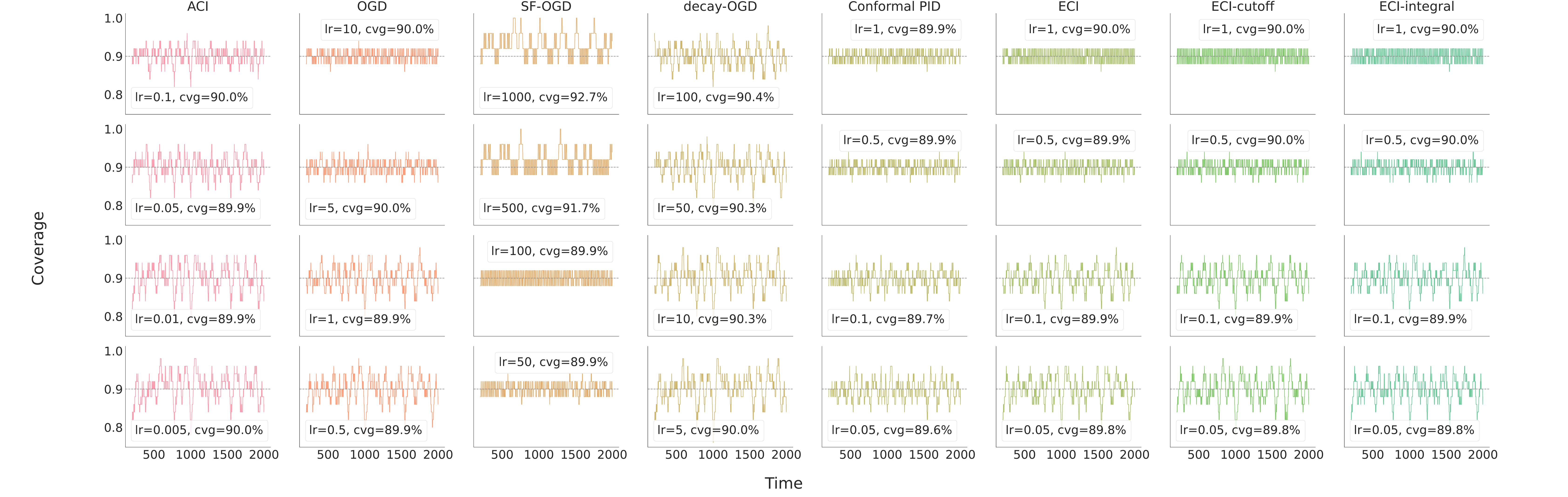}
  \caption{Coverage result on synthetic data under a changepoint setting.}
  \label{changepoint coverage}
\end{figure*}
\begin{figure*}[h]
  \centering
  \includegraphics[width=1\textwidth]{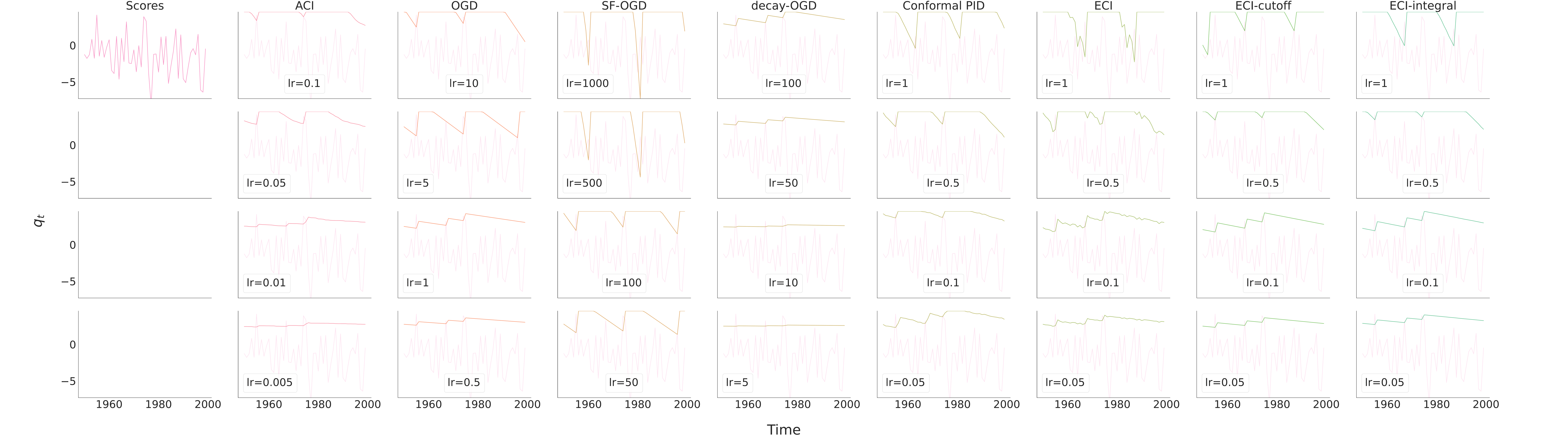}
  \caption{Set width result on synthetic data under a changepoint setting.}
  \label{changepoint set}
\end{figure*}

\section{Experimental results with transformer}
\label{Experimental results with Transformer}
As one of the most successful deep learning models, Transformer has had a significant impact on various application fields, including time series. Thus it is interesting to conduct an additional experiment with Transformer as the base model. We set input length to $12$, output length to $1$, the number of encoder layers to $3$, the number of decoder layers to $3$, and the number of features in the encoder/decoder inputs to $64$. 
\\
The quantitative results are shown in \Cref{table transfomer1,table transfomer2}. Consistent with other base models, ECI variants have achieved the best performance on five benchmark datasets with Transformer.

\begin{table}[ht]
\caption{The experimental results with Transformer base model at nominal level $\alpha = 10\%$.}
\label{table transfomer1}
\setlength{\tabcolsep}{1.25mm} 
\renewcommand{\arraystretch}{1.15} 
\begin{center}
\small
\begin{tabular}{c|ccc|ccc|ccc}
\hline
           & \multicolumn{3}{c|}{Amazon stock}        & \multicolumn{3}{c|}{Google stock}          & \multicolumn{3}{c}{Electricity demand}     \\     
Method &
  \begin{tabular}[c]{@{}c@{}}Coverage\\ ( \%)\end{tabular} &
  \begin{tabular}[c]{@{}c@{}}Average\\ width\end{tabular} &
  \begin{tabular}[c]{@{}c@{}}Median\\ width\end{tabular} &
  \begin{tabular}[c]{@{}c@{}}Coverage\\ ( \%)\end{tabular} &
  \begin{tabular}[c]{@{}c@{}}Average\\ width\end{tabular} &
  \begin{tabular}[c]{@{}c@{}}Median\\ width\end{tabular} &
  \begin{tabular}[c]{@{}c@{}}Coverage\\ ( \%)\end{tabular} &
  \begin{tabular}[c]{@{}c@{}}Average\\ width\end{tabular} &
  \begin{tabular}[c]{@{}c@{}}Median\\ width\end{tabular} \\ \hline
ACI          & 90.1 & $\infty$   & 40.44          & 90.2 & $\infty$    & 57.13 & 90.2 & $\infty$   & 0.109 \\
OGD          & 89.4 & 52.68 & 31.00             & 90.1 & 109.27 & 89.00    & 90.1 & 0.139 & 0.120  \\
SF-OGD       & 89.3 & 56.56 & 31.75          & 90.1 & 88.30   & 70.55 & 90.3 & 0.141 & 0.114 \\
decay-OGD    & 89.8 & 93.16 & 34.98          & 89.9 & 120.25 & 69.81 & 90.3 & 0.147 & 0.111 \\
PID          & 89.8 & 55.36 & 39.04          & 90.1 & 78.69  & 58.65 & 89.9 & 0.428 & 0.435 \\
ECI          & 89.9 & 49.07 & 33.79          & 89.9 & 70.93  & 55.00    & 90.2 & 0.135 & 0.111 \\
ECI-cutoff & 89.7     & \textbf{45.01} & 29.64        & 89.9     & \textbf{66.67} & \textbf{51.29} & 89.9     & \textbf{0.133} & \textbf{0.108} \\
ECI-integral & 89.7 & 45.02 & \textbf{29.46} & 90.0   & 68.64  & 52.45 & 90.2 & 0.135 & 0.111 \\ \hline
\end{tabular}
\end{center}
\end{table}

\begin{table}[ht]
\caption{The experimental results with Transformer base model at nominal level $\alpha = 10\%$.}
\label{table transfomer2}
\setlength{\tabcolsep}{1.25mm} 
\renewcommand{\arraystretch}{1.15} 
\begin{center}
\small
\begin{tabular}{c|ccc|ccc}
\hline
           & \multicolumn{3}{c|}{Amazon stock}        & \multicolumn{3}{c}{Google stock}     \\     
Method &
  \begin{tabular}[c]{@{}c@{}}Coverage\\ ( \%)\end{tabular} &
  \begin{tabular}[c]{@{}c@{}}Average\\ width\end{tabular} &
  \begin{tabular}[c]{@{}c@{}}Median\\ width\end{tabular} &
  \begin{tabular}[c]{@{}c@{}}Coverage\\ ( \%)\end{tabular} &
  \begin{tabular}[c]{@{}c@{}}Average\\ width\end{tabular} &
  \begin{tabular}[c]{@{}c@{}}Median\\ width\end{tabular} \\ \hline
ACI          & 90.3     & $\infty$            & 11.69        & 89.9     & $\infty$           & 8.20          \\
OGD          & 89.9     & 9.72           & 9.50          & 89.9     & 8.13          & 8.20          \\
SF-OGD       & 90.0       & 10.93          & 9.97         & 90.0       & 12.55         & 11.65        \\
decay-OGD    & 89.7     & 14.43          & 11.47        & 90.1     & 8.30           & 8.24         \\
PID          & 89.9     & 10.02          & 9.75         & 89.7     & 10.75         & 9.05         \\
ECI          & 89.9     & 9.15           & 9.16         & 89.9     & 8.09          & 8.15         \\
ECI-cutoff   & 90.0       & 10.42 & 9.95         & 89.9     & \textbf{8.01} & \textbf{8.10} \\
ECI-integral & 89.9     & \textbf{8.87}  & \textbf{8.60} & 89.9     & 8.04          & 8.13         \\ \hline
\end{tabular}
\end{center}
\end{table}

\section{More details on experiments}
\label{More details on experiment}

\subsection{Discussion on the scorecaster}
\label{Discussion on the scorecaster}
It is worth noting that the Conformal PID baseline outperforms ECI under the Prophet base model in \Cref{table elec2}. Actually, the scorecaster term of conformal PID in the baseline utilizes the relatively accurate Theta model  (which can also be replaced by AR or Transformer) to take advantage of any leftover signal and residualize out systematic errors in the score distribution. It can be regarded as an additional component that “sits on top” of the base forecaster (base model). Therefore, across all test datasets, the performance of conformal PID in the Prophet model consistently outperforms that of the AR and Theta model. This is attributed to the fact that Prophet, being a worse-performing model, is complemented by the superior performance of PID's scorecaster Theta, which is a better-performing model. 
\\
We maintained the settings of Table 3 to test the performance of ECI combined with the scorecaster (Theta model), as shown in \Cref{table scorecaster}. The results demonstrated that ECI+scorecaster is consistently superior over Conformal PID across three base models, thereby validating the effectiveness of the ECI update. Interestingly, for the worse-performing Prophet base model, adding the scorecaster enhanced the performance of ECI, while the scorecaster tended to degrade performance when better-performing base models were used. This observation is also mentioned in \citet{pid_angelopoulos2024conformal}: ``an aggressive scorecaster with little or no signal can actually hurt by adding variance to the new score sequence".

\begin{table}[ht]
\caption{The experimental results in the electricity demand dataset with nominal level $\alpha = 10\%$. Both scorecasters of PID and ECI+scorecaster are Theta model.}
\label{table scorecaster}
\setlength{\tabcolsep}{1.25mm} 
\renewcommand{\arraystretch}{1.25} 
\begin{center}
\small
\begin{tabular}{c|ccc|ccc|ccc}
\hline
            & \multicolumn{3}{c|}{Prophet Model}            & \multicolumn{3}{c|}{AR Model}                  & \multicolumn{3}{c}{Theta Model}       \\     
Method &
  \begin{tabular}[c]{@{}c@{}}Coverage\\ ( \%)\end{tabular} &
  \begin{tabular}[c]{@{}c@{}}Average\\ width\end{tabular} &
  \begin{tabular}[c]{@{}c@{}}Median\\ width\end{tabular} &
  \begin{tabular}[c]{@{}c@{}}Coverage\\ ( \%)\end{tabular} &
  \begin{tabular}[c]{@{}c@{}}Average\\ width\end{tabular} &
  \begin{tabular}[c]{@{}c@{}}Median\\ width\end{tabular} &
  \begin{tabular}[c]{@{}c@{}}Coverage\\ ( \%)\end{tabular} &
  \begin{tabular}[c]{@{}c@{}}Average\\ width\end{tabular} &
  \begin{tabular}[c]{@{}c@{}}Median\\ width\end{tabular} \\ \hline
PID             & 90.1 & 0.207 & 0.177 & 90.0 & 0.434          & 0.432 & 89.9 & 0.413          & 0.411          \\
ECI             & 90.0   & 0.384 & 0.382 & 90.0 & \textbf{0.117} & \textbf{0.098} & 89.9 & \textbf{0.071} & \textbf{0.055} \\
ECI+scorecaster & 90.3 & \textbf{0.193}          & \textbf{0.166}          & 90.0 & 0.420  & 0.428 & 89.9 & 0.395 & 0.409 \\ \hline
\end{tabular}
\end{center}
\end{table}

\subsection{More details on learning rates in the experiments}
\label{More details on learning rates in the experiments}
In this section, we show more comprehensive experimental results. We set base predictor as Theta model and all experimental setting are aligned with \Cref{experiment}. Coverage and prediction set results can be seen as the following \Cref{figure more detail1} to \Cref{figure more detail2}.

Since the methods based on OGD are highly sensitive to the learning rate, we initially select four appropriate learning rates for these methods across various datasets, and then choose the one that performs the best among these four. In fact, aside from the OGD-based methods, the four learning rates for other methods remain unchanged. We compile a list that includes all the learning rates, which are 
$$\begin{aligned}
\text{ACI}:&\eta = \{0.1, 0,05, 0.01, 0.005\}, \\
\text{OGD}:&\eta = \{10, 5, 1, 0.5, 0.1, 0.05, 0.01, 0.005\}, \\
\text{SF-OGD}:&\eta = \{1000, 500, 100, 50, 10, 5, 1, 0.5, 0.1, 0.05\}, \\
\text{decay-OGD}:&\eta = \{2000, 1000, 200, 100, 20, 10, 2, 1, 0.2, 0.1\}, \\
\text{Conformal PID}:&\eta = \{1, 0.5, 0.1, 0.05\}, \\
\text{ECI}:&\eta = \{1, 0.5, 0.1, 0.05\}, \\
\text{ECI-cutoff}:&\eta = \{1, 0.5, 0.1, 0.05\}, \\
\text{ECI-integral}:&\eta = \{1, 0.5, 0.1, 0.05\}. \\
\end{aligned}$$

Note that, except ACI and OGD, other methods use $\eta_t$ as adaptive learning rate in practice. Specifically, for SF-OGD:
$$\eta_t = \eta \cdot \frac{\nabla \ell^{(t)}(q_t)}{\sqrt{\sum_{i=1}^{t} \|\nabla \ell^{(i)}(q_i)\|_2^2}}
,$$
\\
where $\ell^{(t)}(q_t)$ is quantile loss and $q_t$ is the predicted radius at time $t$. For decay-OGD:
$$\eta_t = \eta \cdot t^{-\frac{1}{2}-\epsilon},$$
\\
where the hyperparameter $\epsilon=0.1$ follows \citet{angelopoulos2024online}. For conformal PID, ECI, ECI-cutoff and ECI-integral: 
$$\eta_t = \eta \cdot (\max\{s_{t-w+1},\cdots,s_t\}-\min\{s_{t-w+1},\cdots,s_t\}),$$
where $s_t$ is the non-conformality score at time $t$ and the window length $w=100$ follows \citet{pid_angelopoulos2024conformal}.

\begin{figure*}[h]
  \centering
  \includegraphics[width=1\textwidth]{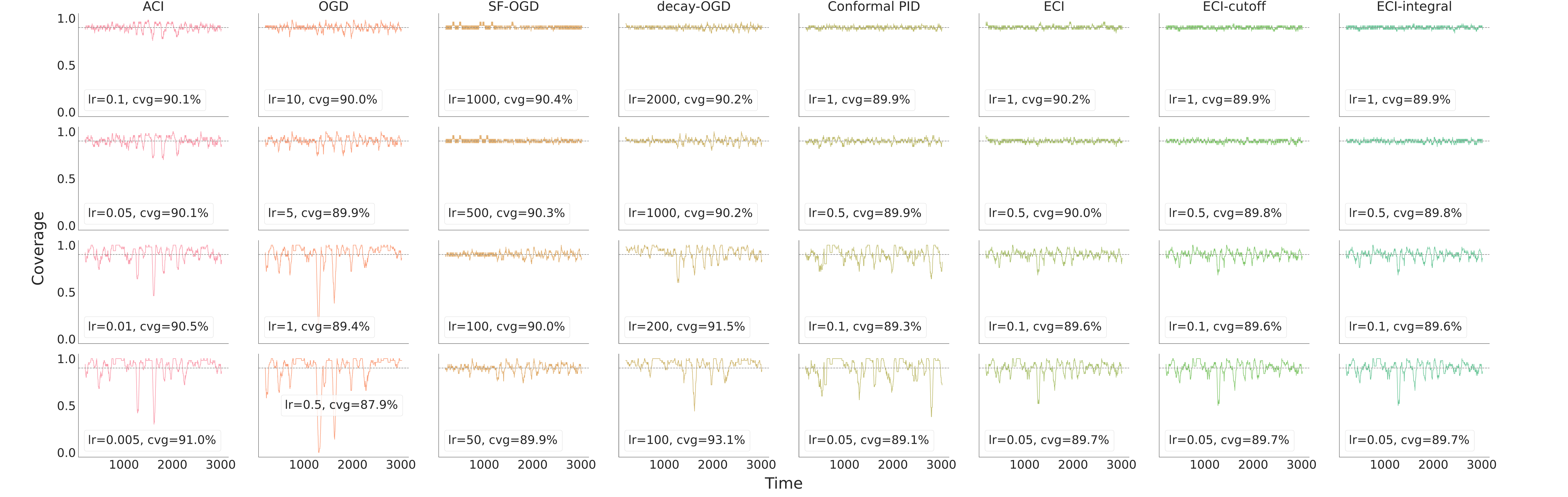}
  \caption{Coverage result on Google stock dataset.}
  \label{figure more detail1}
\end{figure*}
\begin{figure*}[h]
  \centering
  \includegraphics[width=1\textwidth]{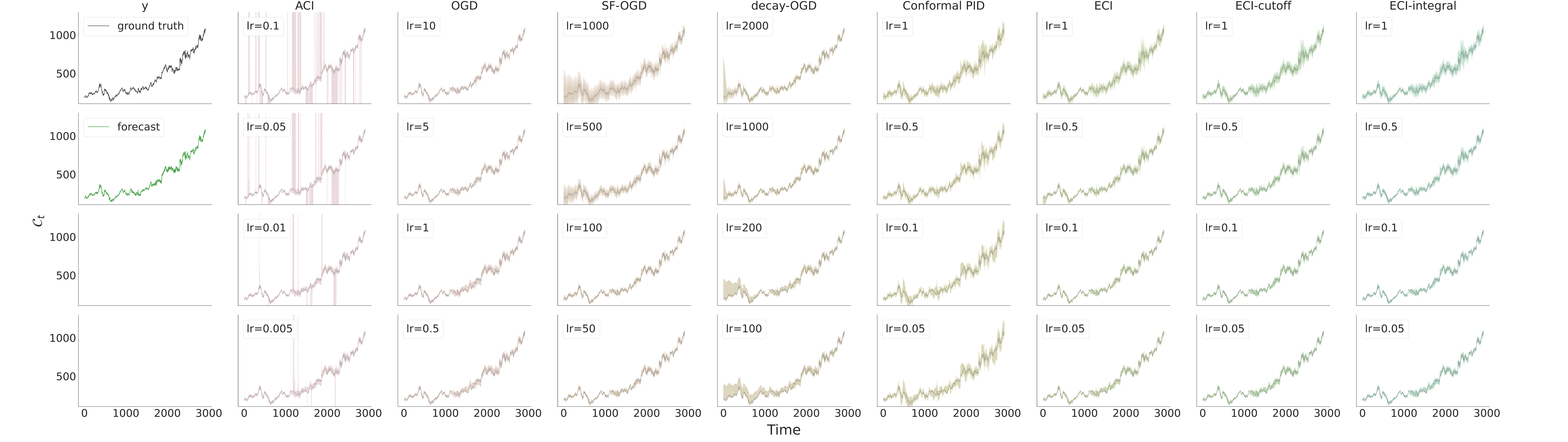}
  \caption{Prediction set result on Google stock dataset.}
\end{figure*}

\begin{figure*}[h]
  \centering
  \includegraphics[width=1\textwidth]{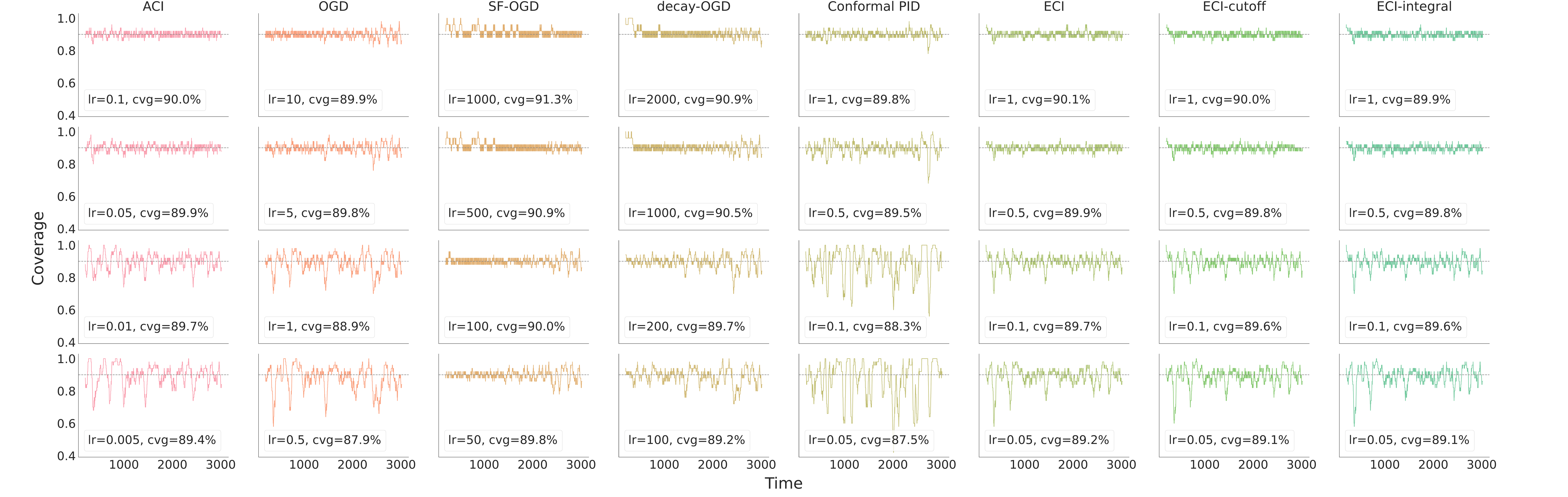}
  \caption{Coverage result on Amazon stock dataset.}
\end{figure*}
\begin{figure*}[h]
  \centering
  \includegraphics[width=1\textwidth]{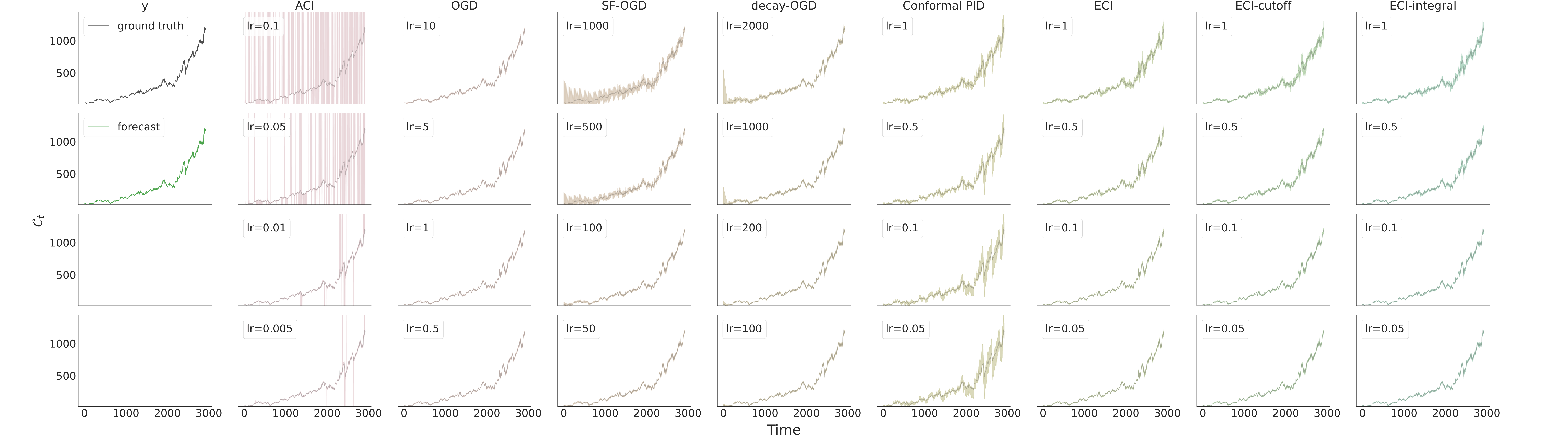}
  \caption{Prediction set result on Amazon stock dataset.}
\end{figure*}

\begin{figure*}[h]
  \centering
  \includegraphics[width=1\textwidth]{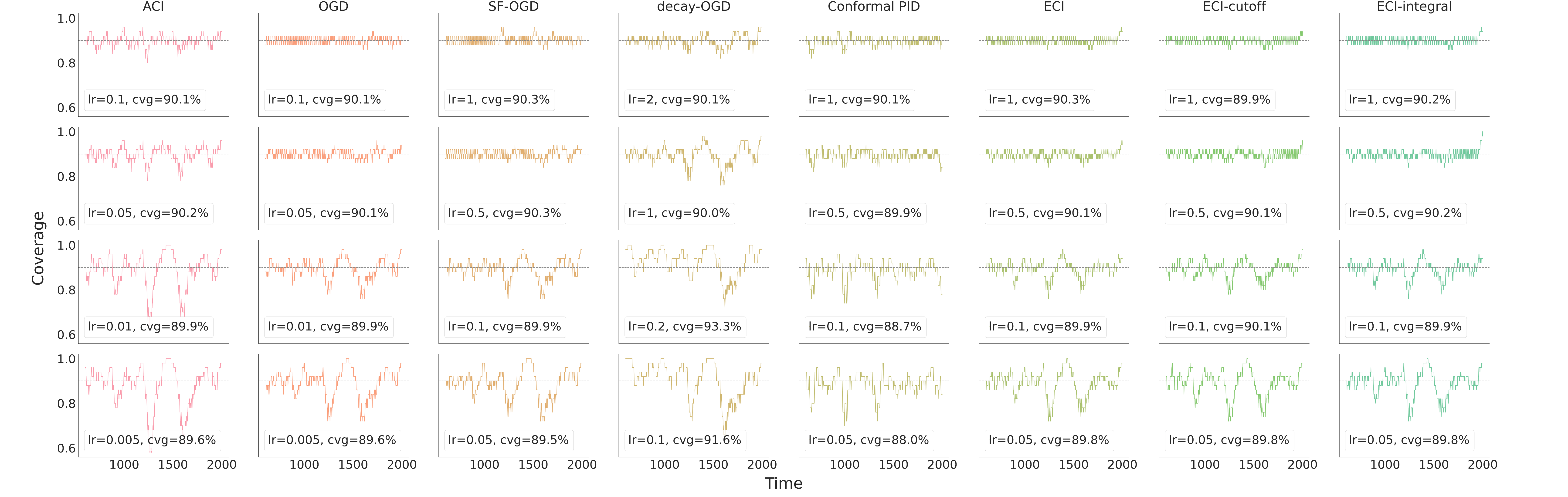}
  \caption{Coverage result on electricity demand dataset.}
\end{figure*}
\begin{figure*}[h]
  \centering
  \includegraphics[width=1\textwidth]{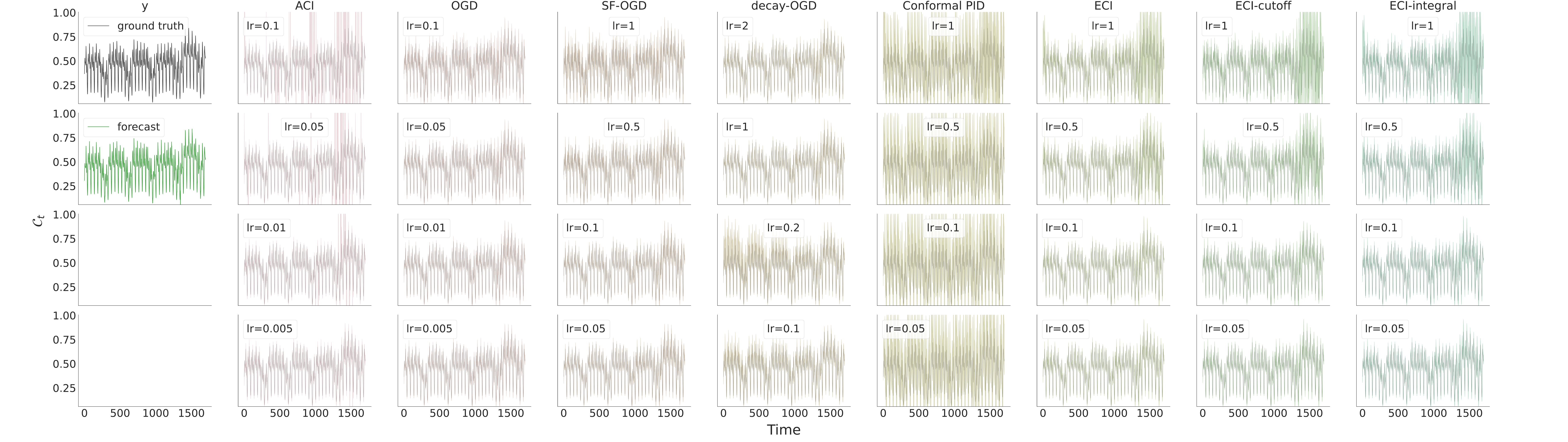}
  \caption{Prediction set result on electricity demand dataset.}
\end{figure*}
\clearpage
\begin{figure*}[h]
  \centering
  \includegraphics[width=1\textwidth]{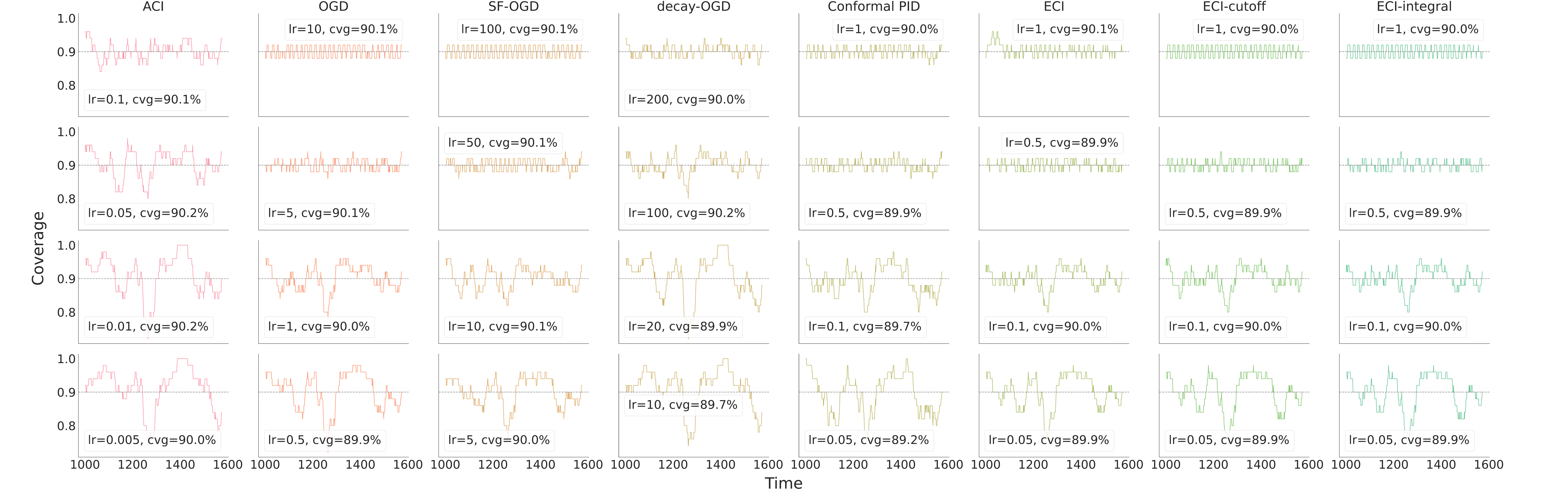}
  \caption{Coverage result on Delhi temperature dataset.}
\end{figure*}
\begin{figure*}[h]
  \centering
  \includegraphics[width=1\textwidth]{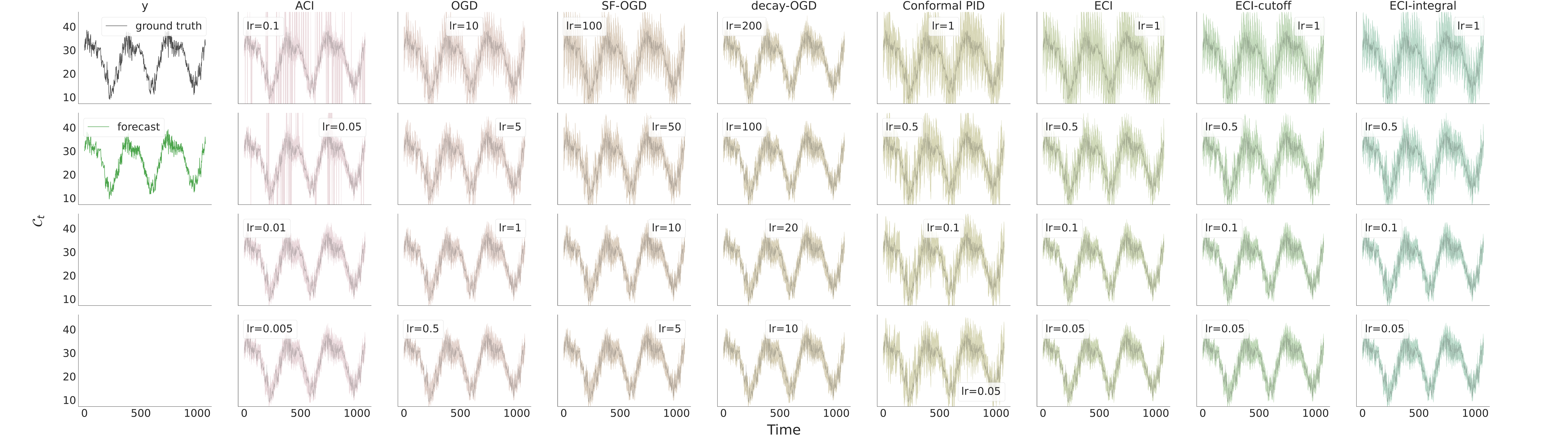}
  \caption{Prediction set result on Delhi temperature dataset.}
  \label{figure more detail2}
\end{figure*}

\end{document}